\newtheorem{theorem}{Theorem}
\newtheorem{definition}{Definition}
\newtheorem{corollary}[theorem]{Corollary}
\newtheorem{property}{Property}
\newtheorem{assumption}{Assumption}
\begin{document}

\title[Understanding SPL under Concave Conjugacy Theory]{Understanding Self-Paced Learning under Concave Conjugacy Theory}

\author[S.-Q~Liu, Z.-L.~Ma, D.-Y~Meng, K.-D.~Wang, and Y.~Zhang]{Shi-Qi Liu, Zi-Lu Ma, De-Yu Meng, Kai-Dong Wang, and
\\Yong Zhang\blfootnote{* Research supported
    by the NSFC projects under contracts 61661166011, 61721002 and Macau STDF under contract 003/2016/AFJ.}}

\begin{abstract}
By simulating the easy-to-hard learning manners of humans/animals, the learning regimes called curriculum learning~(CL) and self-paced learning~(SPL) have been recently investigated and invoked broad interests. However, the intrinsic mechanism for analyzing why such learning regimes can work has not been comprehensively investigated. To this issue, this paper proposes a concave conjugacy theory for looking into the insight of CL/SPL. Specifically, by using this theory, we prove the equivalence of the SPL regime and a latent concave objective, which is closely related to the known non-convex regularized penalty widely used in statistics and machine learning. Beyond the previous theory for explaining CL/SPL insights, this new theoretical framework on one hand facilitates two direct approaches for designing new SPL models for certain tasks, and on the other hand can help conduct the latent objective of self-paced curriculum learning, which is the advanced version of both CL/SPL and possess advantages of both learning regimes to a certain extent. This further facilitates a theoretical understanding for SPCL, instead of only CL/SPL as conventional. Under this theory, we attempt to attain intrinsic latent objectives of two curriculum forms, the partial order and group curriculums, which easily follow the theoretical understanding of the corresponding SPCL regimes.
\end{abstract}

\maketitle

\section{Introduction}

Since being raised recently, self-paced learning~(SPL)\cite{Kumar2010Self} and curriculum learning~(CL)\cite{Bengio2009Curriculum} have been grabbing attention in machine learning and artificial intelligence. Both learning paradigms are designed by simulating the learning principle of humans/animals, attempting to start learning from easier examples and gradually including more complex ones into the training process. The CL regime~\cite{Bengio2009Curriculum,CLApp1,CLAPP3} was formerly designed by setting a series of learning curriculums for ranking samples from easy to hard manually, and the SPL methodology \cite{Kumar2010Self} has then been latterly proposed to make this easy-to-hard learning manner automatically implementable by imposing a regularization term into a general learning object, which enables the learning machine to objectively evaluate the ``easiness" of a sample and automatically learn the object in an adaptive way. This learning paradigm has been empirically verified to be helpful on alleviating the local-minimum issue for a non-convex optimization problem \cite{Zhao2015Self}, while later on more comprehensively to be verified to be capable of making the learning method more robust to heavy noises/outliers \cite{Meng2015What}. Recently, such a new learning regime has been applied to many practical problems, such as multimedia event detection~\cite{Jiang2014Easy}, neural network training~\cite{avramova2015curriculum}, matrix factorization~\cite{Zhao2015Self}, multi-view clustering~\cite{Xu2015Multi}, multi-task learning~\cite{li2016self}, boosting classification~\cite{PiSelf}, object tracking~\cite{Supancic2013}, person re-identification~\cite{reid2013}, face identification~\cite{LinLiang2018}, object segmentation~\cite{Objectsegmentation}, and  some related mechanisms have been applied to weakly supervised learning~\cite{liang2015towards},\cite{wei2017stc},
\cite{liang2017learning}. Furthermore, an intrinsic advanced version of CL/SPL, called self-paced curriculum learning (SPCL)~\cite{Jiang2015Self}, has been designed, which tends to inherit advantages of both SPL and CL and to have a broader application~\cite{jiang2015bridging}. Besides, many variations of SPL realization schemes have also been constructed, like self-paced reranking~\cite{Jiang2014Easy}, self-paced multiple instance learning~\cite{SPMIL,SPMIL-PAMI}, self-paced learning with diversity~\cite{SPLD}, multi-objective self-paced learning~\cite{multiobjective2016}, self-paced co-training~\cite{cotraining2017} and etc.

For understanding the theoretical insights of the working mechanism underlying the CL/SPL strategy, some beneficial investigations have been made. Meng et al~\cite{Meng2015What} proved that the alternative search algorithm generally used to solve the self-paced learning problem is equivalent to a majorization minimization algorithm implemented on a latent SPL object function, which is closely related to the non-convex penalty used in statistics and machine learning~\cite{Meng2015What}. This follows a natural explanation for the intrinsic robustness of CL/SPL. Recently, they have further proved that SPL scheme converges to a critical point of the latent objective \cite{SPLConverge}. Afterwards, Fan et al.~\cite{Fan2016Self} explored an implicit regularization perspective of self-paced learning, which also conducts similar robust understandings for this learning regime. Recently, Li et al.~\cite{Li2017Self} proposed a general way to find the desired self-paced functions, which is beneficial for constructing more variations of SPL forms in practice.

However, these investigations explore the SPL theory mainly through exploring the equivalence of the alternative search algorithm on the SPL objectives and other algorithms implemented on some latent objective functions, while not on the SPL objective function, as well as its self-paced regularizer, itself. This makes the theory not sufficiently insightful to the problem. For example, the intrinsic relationships between self-paced regularizers and the weighting scheme to measure the importance of training samples in a SPL model is generally implicit, and hard to be intuitively explained. Besides, after adding curriculum constraint in SPL regime to form a SPCL model, current theories cannot attain the latent function like under general SPL framework. The rationality of SPCL thus still rests on the intuitive level.

To alleviate these issues, this study mainly makes the following contributions: Firstly, we establish a systematic theoretical framework under concave conjugacy theory for understanding the CL/SPL/SPCL insights. We find that the concave conjugacy theory surprisingly tallies with the requirements of the SPL model. And under this framework, the relationship among self-paced regularizer, latent SPL object function and sample weights can be clarified in a theoretically sound manner. Besides, by using this theory, the redundancy of the original SPL axiom can be removed and simplified, and the influence of the age parameter can be interpreted. Secondly, we can render a general approach for designing the SPL regime by using this theory. Furthermore, one can easily embed the required prior knowledge directly to the sample weights under this framework to make it properly used in specific applications. Thirdly, the latent objective of SPCL can be obtained under this theory. We especially discuss the form of the latent objective functions of SPCL under the partial order and group curriculums. This theory is thus meaningful for providing generalizable explanation for more general CL/SPL variations.

The paper is organized as follows. Section 2 introduces the necessary concepts and theories on concave conjugacy. Section 3 proposes the concave conjugacy theory for understanding CL/SPL. Section 4 presents two general approaches for designing a specific SPL model. Section 5 provides the theoretical understanding for SPCL under this new theory, and discusses the latent objectives of two specific curriculums.

\section{Related contents on concave conjugacy}

In the following we use the bolded lower letter to denote a vector, and the non-bolded lower letter to denote a scaler. For $\mathbf{v}$ and $u$, denote $(\mathbf{v},u)$ as a vector in $\mathcal{R}^{n+1}$ by arranging $u$ after the last position of $\mathbf{v}$. The inequality $\mathbf{v}\succeq \mathbf{u}$ means that satisfies $v_i \ge u_i$ for $i=1,\cdots,n$; $ \langle \mathbf{v},\mathbf{l} \rangle =\mathbf{v}^{T}\mathbf{l}$ denotes the inner products of $\mathbf{v}$ and $\mathbf{l}$. For a concave function, we assume that it takes $-\infty$ out of its domain; for a convex function, we assume that it takes $+\infty$ out of its domain. Before giving more related concepts, we first presents the following definition.

 \begin{definition}[Increasing Function] A multivariate function $f(\mathbf{v})$ is increasing if  $f(\mathbf{v})\ge f(\mathbf{u})$ for all $\mathbf{v}\succeq \mathbf{u}$ lying in its domain denoted by $dom \ f$.
 \end{definition}

\subsection{Conjugate}

We first present some necessary concepts and their related properties on the conjugate theory.

\begin{definition}[Hypograph]The \textbf{hypograph} associated with the function $g :\mathcal{R}^n\longrightarrow \mathcal{R}$ is the set of points lying on or below its graph:
$$hyp\ g=\{(\mathbf{v},u):\mathbf{v}\in \mathcal{R}^n, u\in \mathcal{R}, u\le g(\mathbf{v})\}\subset \mathcal{R}^{n+1}.$$
\end{definition}

\begin{property}[Hypograph Correspondence\cite{Rockafellar1970}]
The function $g(\cdot)$ and its hypograph satisfy the following correspondence:
$$g(\mathbf{v})=\sup \limits_{(\mathbf{v},u)\in hyp \ g}u.$$
\end{property}

\begin{property}[Concave function]
  $g(\cdot)$ is a concave function  if and only if  $hyp \ g$  is a convex set.
\end{property}

\begin{definition}[Closure of Function]
 The closure of the function $g(\cdot)$ is a function generated by the closure of its hypograph: $$cl\ g=\sup \limits_{(\mathbf{v},u)\in cl \ (hyp \ g)}u.$$ It yields $$hyp\ (cl\ g)=cl\ (hyp\ g).$$
\end{definition}

\begin{definition}[Concave Conjugate]The concave conjugate of a function $g(\cdot)$ is defined as follows: $$g^*(\mathbf{l})=\inf \limits_{\mathbf{v}\in \mathcal{R}^n}\{\langle \mathbf{v},\mathbf{l} \rangle-g(v)\}.$$
\end{definition}

\begin{property}[Relation of Concave Conjugate and Convex Conjugate\cite{Rockafellar1970}\label{Relation of Concave Conjugate and Convex Conjugate}]
  For a convex function $f(\mathbf{v})=-g(\mathbf{v})$, it holds that: $$g^*(\mathbf{l})=-f^*(-\mathbf{l})$$ where $f^*(\mathbf{l})$ is the convex conjugate of $f(\mathbf{v})$ defined as: $$f^*(\mathbf{l})=\sup \limits_{\mathbf{v}\in \mathcal{R}^n}\{\langle \mathbf{v},\mathbf{l} \rangle-f(\mathbf{v})\}.$$
\end{property}

For notation convenience, in the follows we also use conjugate to represent concave conjugate.

\begin{definition}[Proper Function]
  A concave function $g(\cdot)$ is proper if it takes value on $[-\infty,+\infty)$ and there is at least one $\mathbf{v}$ such that $g(\mathbf{v})>-\infty$.
\end{definition}

Following the proof given by W.Fenchel~\cite{fenchel1949conjugate} regarding the property of the conjugate convex function, one can easily prove that if $g(\mathbf{v})$ is proper, then $g^*(v)$ is a closed concave function. The concave conjugacy inherits the following duality properties of convex conjugacy as well.

\begin{property}[Duality\cite{Rockafellar1970}]
  If $g(\cdot)$ is a upper semi-continuous, concave and proper function, $$g^{**}(\mathbf{v})=g(\mathbf{v})$$ i.e. $$g(\mathbf{v})=\inf \limits_{l\in \mathcal{R}^n}\{\langle \mathbf{v},\mathbf{l}, \rangle-g^*(\mathbf{l})\}.$$
\end{property}

It can be observed that the concave conjugate presents a one-to-one correspondence for all closed proper concave functions defined on $\mathcal{R}^n$.

\subsection{Additive properties}
The additive properties of concave conjugacy are also required to prove the related theory for SPL. We thus introduce the following necessary definitions and properties.

\begin{definition}[Sup-Convolution] The sup-convolution of functions $f(\cdot)$ and $g(\cdot)$ is defined as: $$f\oplus g(\mathbf{v})=\sup \limits_{\mathbf{v}_1+\mathbf{v}_2=\mathbf{v}}\{f(\mathbf{v}_1)+g(\mathbf{v}_2)\}$$
\end{definition}

The sup-convolution has the following properties:

\begin{property}[Increasing and Concave Preserving]
Let $h=f\oplus g$, and then
  \begin{itemize}
  \item  if $f(\cdot)$ and $g(\cdot)$ are increasing function, so is $h$;
  \item if $f(\cdot)$ and $g(\cdot)$ are concave function, so is $h$.
\end{itemize}
\end{property}

The relationship between the sup-convolution and the concave conjugate can be well illustrated by the following result.

\begin{property}[Additive Property\label{Additive Property}] Let $g_1(\cdot),\cdots,g_m(\cdot)$ be proper concave functions defined on $\mathcal{R}^n$. Then we have:
$$(g_1\oplus\cdots\oplus g_m)^*=g_1^*+\cdots+g_n^*,$$ $$(cl\ g_1+\cdots+cl\ g_m)^*=cl(g_1^*\oplus\cdots\oplus g_m^*).$$
If the relative interior of $(dom \ g_i),i=1,\cdots,m$ have a point in common, the closure operation can be omitted from the above second formula, and $$(g_1+\cdots+g_m)^*=g_1^*\oplus\cdots\oplus g_m^*,$$
$$(g_1+\cdots+g_m)^*(\mathbf{l})=\sup \limits_{\mathbf{l}^1+\cdots+\mathbf{l}^m=\mathbf{l}}\{g_1^*(\mathbf{l}^1)+\cdots+g_m^*(\mathbf{l}^m)\},$$ where for each $\mathbf{l}$ the supremum is attained.
\end{property}
The proof of this property can be referred to in \cite{Rockafellar1970}.

\subsection{Differential theory}
The differential theory regarding the concave conjugate plays an important role in our SPL theory. Some necessary definitions and properties are thus introduced as follows.

\begin{definition}[Subgradient]A vector $\mathbf{l}$ is a subgradient of a concave function $g(\cdot)$ at $\mathbf{v}$ if $$g(\mathbf{z})\le g(\mathbf{l})+\langle \mathbf{l},\mathbf{z}-\mathbf{l}\rangle, \forall \mathbf{z}\in\mathcal{R}^n.$$
The set of all subgradients of $g(\cdot)$ at $\mathbf{v}$ is called the subdifferential of $g(\cdot)$ at $\mathbf{v}$ and is denoted by $\partial g(\mathbf{v}) $.
\end{definition}

Correspondingly, the subgradient $\mathbf{l}$ of a convex function $f(\mathbf{v})=-g(\mathbf{v})$ at $\mathbf{v}$ if $$f(\mathbf{z})\ge f(\mathbf{v})+\langle l,\mathbf{z}-\mathbf{v}\rangle, \forall \mathbf{z}\in\mathcal{R}^n.$$
The set of all subgradients of $f(\cdot)$ at $\mathbf{v}$ is called the subdifferential of $f(\cdot)$ at $\mathbf{v}$ and is denoted by $\partial f(\mathbf{v})$.

The above subdifferentials of $f(\cdot)$ and $g(\cdot)$ have the following relation $$\partial g(v)=-\partial f(v).$$.

\begin{property}[Duality of Subdifferential~\cite{Rockafellar1970}]\label{DoS}For any closed proper concave function $g(\cdot)$ and any vector $\mathbf{v}$, the following conditions on a vector $\mathbf{l}$ are equivalent to each other:
\begin{itemize}
  \item $\mathbf{l}\in \partial g(\mathbf{v})$;
  \item $\langle \mathbf{z},\mathbf{l} \rangle -g(\mathbf{z})$ achieves its infmum in $\mathbf{z}$ at $\mathbf{z}=\mathbf{v}$;
  \item $g(\mathbf{v})+g^*(\mathbf{l})=\langle \mathbf{z},\mathbf{l} \rangle$;
  \item $\mathbf{v}\in\partial g^*(\mathbf{l})$;
  \item $\langle \mathbf{v},\mathbf{z} \rangle -g(\mathbf{z})$ achieves its infmum in $\mathbf{z}$ at $\mathbf{z}=\mathbf{l}$.
\end{itemize}
\end{property}

\begin{property}[Structure of Subdifferential~\cite{Rockafellar1970}]\label{SoS} Let $g(\cdot)$ be a closed proper concave function such that $dom \ g$ has a non-empty interior. Then $$\partial g(\mathbf{x})=cl(conv S(\mathbf{x}))+K(\mathbf{x}) \ \forall \mathbf{x}\in\mathcal{R}^n,$$ where $K(\mathbf{x})=\{\mathbf{x}^*|\langle \mathbf{y}-\mathbf{x},\mathbf{x}^*\rangle \ge0 \ \forall \mathbf{y}\in dom\ g\}$ is the normal cone to $dom\ g$ at $\mathbf{x}$ and $S(\mathbf{x})$ is the set of all limits of sequences $(\nabla g(x_1),\nabla g(x_2),\cdots)$ such that $g(\cdot)$ is differentiable at $\mathbf{x}_i$ and $\mathbf{x}_i$ converges to $\mathbf{x}$.\end{property}

\begin{theorem}[Duality of essential strictly convex and essentially smooth\label{Duality of essential strictly concave and essentially smooth}\cite{Rockafellar1970}] A closed proper convex function is essential strictly convex if and only if its conjugate is essential smooth.
\end{theorem}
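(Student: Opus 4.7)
The plan is to leverage the duality machinery already established, especially Property~\ref{DoS} (Duality of Subdifferential) and Property~\ref{SoS} (Structure of Subdifferential), together with the biconjugation identity $f^{**}=f$ valid for closed proper convex functions. Because the statement is symmetric in $f$ and $f^{*}$, it suffices to prove one implication and then invoke biconjugation for the reverse. I would first unfold the two notions: \emph{essentially strictly convex} means $f$ is strictly convex on every convex subset of $\mathrm{dom}(\partial f)$, while \emph{essentially smooth} means $\mathrm{int}(\mathrm{dom}\, f)$ is nonempty, $f$ is differentiable on this interior, and $\|\nabla f(x_i)\|\to \infty$ whenever $x_i$ approaches a boundary point of $\mathrm{dom}\, f$.

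For the forward direction, assume $f$ is essentially strictly convex and target essential smoothness of $f^{*}$. Differentiability on $\mathrm{int}(\mathrm{dom}\, f^{*})$ would follow from showing the subdifferential $\partial f^{*}(l)$ is a singleton at each such $l$: if $v_{1},v_{2}\in \partial f^{*}(l)$, Property~\ref{DoS} places both $v_1,v_2$ in $\{v: l \in \partial f(v)\}$, and summing the two subgradient inequalities forces equality along the segment $[v_1,v_2]$, so $f$ would coincide with the affine minorant $\langle l, \cdot\rangle - f^{*}(l)$ on that segment, contradicting essential strict convexity unless $v_1 = v_2$. Property~\ref{SoS}, which expresses $\partial f^{*}(l)$ as the sum of a convex hull of gradient limits with a normal cone, then forces differentiability of $f^{*}$ at $l$ whenever the subdifferential is a singleton in the interior.

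The harder step is the boundary blow-up: I would argue contrapositively, supposing a sequence $l_i \to l \in \mathrm{bd}(\mathrm{dom}\, f^{*})$ for which $\nabla f^{*}(l_i)$ stays bounded, extract a convergent subsequence $\nabla f^{*}(l_i)\to v$, and use closedness of the subdifferential graph to place $v\in \partial f^{*}(l)$. The structural description in Property~\ref{SoS} then yields a nontrivial normal-cone contribution at $l$ (since $l$ is a boundary point), which translates through Property~\ref{DoS} into a direction along which $f$ is affine within $\mathrm{dom}(\partial f)$, contradicting essential strict convexity.

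The main obstacle I anticipate is precisely this boundary analysis: cleanly converting the existence of a nontrivial normal cone at a boundary point of $\mathrm{dom}\, f^{*}$ into a segment on which $f$ fails to be strictly convex requires careful recession-function bookkeeping, and is where Rockafellar's original argument is most delicate. The converse implication is then short: assuming $f^{*}$ essentially smooth, apply the already-proven direction to $f^{*}$, whose conjugate is $f^{**}=f$, to conclude that $f$ is essentially strictly convex.
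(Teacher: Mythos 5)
The paper does not actually prove this statement: it is imported verbatim from Rockafellar's \emph{Convex Analysis} (Theorem 26.3 there) and used as a black box, so there is no in-paper argument to compare yours against. Judged on its own, your sketch of the forward implication is sound in outline — the singleton-subdifferential argument via Property~\ref{DoS} and the affine-segment contradiction is exactly the standard mechanism, and you correctly flag the boundary blow-up as the delicate step — but the overall logical architecture has a genuine flaw.

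The flaw is the opening claim that ``the statement is symmetric in $f$ and $f^{*}$, so it suffices to prove one implication and invoke biconjugation for the reverse.'' It is not symmetric: the theorem pairs two \emph{different} properties (essential strict convexity of $f$ with essential smoothness of $f^{*}$). Applying your proven direction to $f^{*}$ yields ``if $f^{*}$ is essentially strictly convex then $f^{**}=f$ is essentially smooth,'' which is the dual theorem, not the converse ``if $f^{*}$ is essentially smooth then $f$ is essentially strictly convex.'' To close the equivalence you must argue the converse separately, e.g.\ contrapositively: if $f$ is affine on a nontrivial segment $[v_1,v_2]\subset \mathrm{dom}(\partial f)$, then any $l$ in the subdifferential of $f$ at the midpoint is a common subgradient at $v_1$ and $v_2$, so $\partial f^{*}(l)\supseteq\{v_1,v_2\}$ is not a singleton, and one then needs the characterization of essential smoothness as ``the subdifferential contains at most one element at every point'' (Rockafellar's Theorem 26.1, which is also the cleanest way to package your boundary blow-up step). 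Rockafellar's own proof runs the whole equivalence as a chain of iffs through that characterization together with the inversion rule $\partial f^{*}=(\partial f)^{-1}$, so both directions come out simultaneously. A secondary omission: essential smoothness of $f^{*}$ also requires $\mathrm{int}(\mathrm{dom}\,f^{*})\neq\emptyset$, which you never derive from essential strict convexity of $f$; if $\mathrm{dom}\,f^{*}$ lay in a hyperplane $\{l:\langle l,d\rangle=c\}$, then $f$ would be affine in the direction $d$ along segments of $\mathrm{dom}(\partial f)$, so this case must be ruled out explicitly as well.
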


\begin{corollary}\label{Strictly Convexity implies Differentiability} If $f(\cdot)$ is a closed strictly convex function with bounded domain, then $f^*(\cdot)$ is a closed differentiable function on the whole space.
\end{corollary}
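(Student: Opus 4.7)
The plan is to deduce this corollary directly from the preceding Theorem on the duality of essential strict convexity and essential smoothness. Since the corollary asserts two things about $f^*$, namely closedness and differentiability on the whole space, I would handle them by separate arguments: closedness is a standard consequence of $f$ being closed and proper (so that $f^{**}=f$ and $f^*$ is itself closed proper convex), while differentiability is where the substantive content lies and is what I obtain from the theorem.

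First I would verify the hypotheses needed to invoke the theorem. Strict convexity of $f$ on its domain immediately gives that $f$ is essentially strictly convex, since essential strict convexity only requires strict convexity on each convex subset of $\{\mathbf{v}:\partial f(\mathbf{v})\neq\emptyset\}\subseteq\operatorname{dom} f$. Closedness and properness of $f$ are given. Thus by the Duality theorem, $f^*$ is essentially smooth, which by definition means that $\operatorname{int}(\operatorname{dom} f^*)$ is nonempty, $f^*$ is differentiable there, and $\|\nabla f^*(\mathbf{l}_i)\|\to\infty$ whenever $\mathbf{l}_i$ approaches the boundary of $\operatorname{dom} f^*$.

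The key remaining step, and what I expect to be the main (though elementary) obstacle, is to show that $\operatorname{dom} f^*=\mathbb{R}^n$, so that the interior in the conclusion of the theorem is actually the whole space. For this I would use boundedness of $\operatorname{dom} f$: for any $\mathbf{l}\in\mathbb{R}^n$ and any $\mathbf{v}\in\operatorname{dom} f$, the Cauchy–Schwarz inequality gives $\langle\mathbf{v},\mathbf{l}\rangle\le \|\mathbf{l}\|\cdot R$, where $R=\sup_{\mathbf{v}\in\operatorname{dom} f}\|\mathbf{v}\|<\infty$. Combined with the fact that a closed proper convex function on a bounded domain is bounded below (attains its minimum on the closure of its domain), this yields
\[
f^*(\mathbf{l})=\sup_{\mathbf{v}\in\operatorname{dom} f}\bigl\{\langle\mathbf{v},\mathbf{l}\rangle-f(\mathbf{v})\bigr\}\le R\|\mathbf{l}\|-\inf f<+\infty
\]
for every $\mathbf{l}\in\mathbb{R}^n$, so $\operatorname{dom} f^*=\mathbb{R}^n$ and $\operatorname{int}(\operatorname{dom} f^*)=\mathbb{R}^n$.

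Putting the pieces together, essential smoothness of $f^*$ on $\operatorname{dom} f^*=\mathbb{R}^n$ gives differentiability everywhere on $\mathbb{R}^n$ (the boundary-gradient blow-up condition is vacuous because there is no boundary), and closedness of $f^*$ follows from the general fact that the convex conjugate of a proper function is closed convex. This completes the deduction; the only nontrivial verification is the domain computation, and even that is straightforward once boundedness of $\operatorname{dom} f$ is used.
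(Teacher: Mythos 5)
Your proof is correct and follows essentially the same route as the paper's: deduce essential strict convexity from strict convexity, invoke the duality theorem to get essential smoothness of $f^*$, and use boundedness of $\operatorname{dom} f$ to conclude that $\operatorname{dom} f^* = \mathbb{R}^n$ so that differentiability holds everywhere. The only difference is cosmetic: where the paper cites the fact that a convex function with bounded domain is co-finite, you verify $f^*(\mathbf{l}) \le R\|\mathbf{l}\| - \inf f < +\infty$ directly, which is a nice self-contained touch.
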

\begin{proof}
  Since $f(\cdot)$ is with bounded domain, we know $f(\cdot)$ is co-finite. And then we have that $f^*(\cdot)$ is defined on whole space~\cite{Rockafellar1970}.

  Furthermore, since $f(\cdot)$ is strictly convex, we can deduce that it is essential strictly convex~\cite{Rockafellar1970}.
  According to theorem~\ref{Duality of essential strictly concave and essentially smooth}, $f^*(\cdot)$ is essential smooth on the whole space, meaning that $f^*(\cdot)$ is differentiable on the whole space~\cite{Rockafellar1970}.
\end{proof}

\subsection{Indicator function}
The following theory illustrates that a restriction imposed on feasible region can be viewed as the addition of an indicator function of the restricted feasible region to the objective function.

\begin{definition}[Indicator Function]
The indicator function of a convex set $C\subset \mathcal{R}^n$ is defined by: $$\delta(\mathbf{v}|C)=\left\{
\begin{aligned}
0 &  & \mathbf{v}\in C,\\
-\infty &  & \mathbf{v}\notin C. \\
\end{aligned}
\right
.$$
\end{definition}
The closure of  $\delta(\mathbf{v}|C)$ satisfies $cl\ \delta(\mathbf{v}|C)= \delta(\mathbf{v}|cl\ C)$.

\begin{definition}We call the conjugate of $\delta(\mathbf{v}|C)$ the support function of $C$ : $$\delta^*(\mathbf{l}|C)=\inf \limits_{\mathbf{v}\in C}{\langle \mathbf{v},\mathbf{l} \rangle}.$$
\end{definition}

Based on the above definitions of indictor function and support function, the concave conjugate with constraint can be interpreted in a new way. Specifically, suppose $g(\cdot)$ is a upper semi-continuous, proper, concave function, $\Psi$ is a closed convex set and the relative interior of $dom\ g$ and $\Psi$ have at lease a point in common. Then we have
$$\inf \limits_{\mathbf{v}\in \Psi}\{\langle \mathbf{v},\mathbf{l} \rangle-g(\mathbf{v})\}=\inf \limits_{\mathbf{v}\in \mathcal{R}^n}\{\langle \mathbf{v},\mathbf{l} \rangle-g(\mathbf{v})-\delta(\mathbf{v}|\Psi)\}$$$$=(g(\mathbf{v} )+\delta(\mathbf{v}|\Psi))^*=g^*\oplus \delta^*(\mathbf{l} |\Psi).$$
This implies that a concave conjugate with domain constraint can be understood as the addition of two conjugate terms. This will help a lot to deduce the related theory on explaining SPCL. Details will be shown in Section 4.

\begin{theorem}[Monotone Conjugate]\label{Monotone Conjugate} If $g(\mathbf{v})$ is a function defined on a closed set $\Psi\subset \mathcal{R}_+^n$, then  $$g^*(l)=\inf \limits_{\mathbf{v}\in \Psi}\{\langle \mathbf{v},\mathbf{l} \rangle-g(\mathbf{v})\}$$ is increasing on $\mathcal{R}^n$.
\end{theorem}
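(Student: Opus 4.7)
The plan is to prove monotonicity directly from the definition of $g^*$, exploiting the nonnegativity of every feasible $\mathbf{v}$. I would pick two points $\mathbf{l}_1 \succeq \mathbf{l}_2$ in $\mathcal{R}^n$ and aim to show $g^*(\mathbf{l}_1) \ge g^*(\mathbf{l}_2)$, which is exactly the definition of an increasing function.

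First I would observe that for every fixed $\mathbf{v} \in \Psi$, the map $\mathbf{l} \mapsto \langle \mathbf{v}, \mathbf{l} \rangle - g(\mathbf{v})$ is linear in $\mathbf{l}$ with coefficient vector $\mathbf{v}$. Since $\Psi \subset \mathcal{R}_+^n$, we have $\mathbf{v} \succeq \boldzero$, and $\mathbf{l}_1 - \mathbf{l}_2 \succeq \boldzero$ by hypothesis, so $\langle \mathbf{v}, \mathbf{l}_1 - \mathbf{l}_2 \rangle \ge 0$. Rearranging gives the pointwise inequality
\[
\langle \mathbf{v}, \mathbf{l}_1 \rangle - g(\mathbf{v}) \;\ge\; \langle \mathbf{v}, \mathbf{l}_2 \rangle - g(\mathbf{v})
\]
for every $\mathbf{v} \in \Psi$.

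Next I would take the infimum over $\mathbf{v} \in \Psi$ on both sides. Because the inequality is preserved by infima, this yields $g^*(\mathbf{l}_1) \ge g^*(\mathbf{l}_2)$ directly. Since $\mathbf{l}_1, \mathbf{l}_2$ were arbitrary with $\mathbf{l}_1 \succeq \mathbf{l}_2$, the function $g^*$ is increasing on $\mathcal{R}^n$ in the sense of Definition~1.

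There is essentially no hard step here: the whole argument rests on the single observation that restricting the domain of the infimum to the nonnegative orthant makes the linear term in the conjugate monotone in $\mathbf{l}$. The only mild subtlety to flag is that no appeal to closedness of $\Psi$, properness of $g$, or existence of the infimum is needed; the inequality $g^*(\mathbf{l}_1) \ge g^*(\mathbf{l}_2)$ holds in the extended reals $[-\infty,+\infty)$ regardless, so the conclusion is valid even when $g^*$ takes the value $-\infty$.
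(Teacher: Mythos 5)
Your proof is correct and is the standard (essentially the only) argument for this fact: for each fixed $\mathbf{v}\in\Psi\subset\mathcal{R}_+^n$ the map $\mathbf{l}\mapsto\langle\mathbf{v},\mathbf{l}\rangle-g(\mathbf{v})$ is increasing, and an infimum of increasing functions is increasing, which is exactly the route the paper takes. Your closing remark that closedness of $\Psi$ and finiteness of the infimum are not actually needed is a fair observation and does not affect the validity of the argument.
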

The proof of this theorem can be seen in Appendix~A.

\section{Concave conjugate theory for SPL  \label{sec2}}

\subsection{SPL Regime}
We first give a short review to the generally used SPL regime.

For a given data set $D=\{\mathbf{z}_i\}_{i=1}^n$, where $z_i=(\mathbf{x}_i,y_i)$ is a training sample with a datum and its corresponding label, SPL uses the following model for learning \cite{Jiang2014Easy,Zhao2015Self}:
  \begin{equation}\label{self-paced learning model}
    \inf_{f\in \mathcal{F},\mathbf{v}\in [0,1]^n}E(f,\mathbf{v};\lambda)=\inf_{f_w\in \mathcal{F},\mathbf{v}\in [0,1]^n}\sum_{i=1}^{n}v_i L(f,\mathbf{z}_i)+R_{SP}(\mathbf{v},\lambda)+R_\mathcal{F}(f),
  \end{equation}
   where $\mathbf{v}=(v_1,v_2,\cdots,v_n)=(Weight(\mathbf{z}_1),\cdots,Weight(\mathbf{z}_n))$ represent the vector of weights imposed on all training samples, $R_{SP}(\mathbf{v},\lambda)$ is called self-paced regularizer which encodes the learning procedures following the principle from easy to hard, $R_\mathcal{F}(f)$ is the general regularizer for the model parameters to alleviate the overfitting problem, and $\lambda$ is a parameter that controls the learning pace and guarantees the easy-to-complex learning procedure. By gradually increasing the age parameter, more samples can be automatically included with higher weights into training in a purely self-paced way. $f$ is the decision function for the task, like a classifier or a regressor, $L(\cdot,\cdot)$ is the loss function (the function $f$ is generally parameterized by parameters $\mathbf{w}$ and $L$ is then the function with respect to $\mathbf{w}$ and $\mathbf{z}$). Let $\mathbf{l}$ denote the loss vector $(L(f,\mathbf{z}_1),\cdots,L(f,\mathbf{z}_n))^T$. This leads to a brief expression for the model:
    $$\inf_{\mathbf{w}\in \mathcal{W},\mathbf{v}\in [0,1]^n}\langle \mathbf{v},\mathbf{l}\rangle+R_{SP}(\mathbf{v},\lambda)+R_\mathcal{W}(\mathbf{w}).$$

  A common way to solve the SPL model is to alternatively optimize the target function $f$ and the weight vector $\mathbf{v}$ as follows:
\begin{itemize}
  \item Optimize $f$:
  \begin{equation}
  f^{k}=\inf_{f\in \mathcal{F}}  \langle \mathbf{v}^{k-1},\mathbf{l}(f)\rangle+R_\mathcal{F}(f).
  \end{equation}

  \item Optimize $\mathbf{v}$:
  \begin{equation}
  \mathbf{v}^{k}=\inf_{\mathbf{v}\in [0,1]^n} \langle \mathbf{v},\mathbf{l}(f^{k})\rangle+R_{SP}(\mathbf{v},\lambda).
  \end{equation}
  \end{itemize}

The SP-regularizer should satisfy necessary conditions to guarantee an expected easy-to-hard learning manner \cite{Jiang2014Easy,Zhao2015Self}:

  \begin{definition}[SP-regularizer\label{SP-regularizer}]$R_{SP}(\mathbf{v},\lambda)$ is called a SP-regularizer, if
  \begin{itemize}
  \item $R_{SP}(\mathbf{v},\lambda)$ is convex with respect to $\mathbf{v}\in [0,1]^n$;

  \item $v_i(\lambda,l_i)$ decrease with respect to $l$, and it holds that $\forall i \in \{1,2,\cdots,n\}$, $v_i(\lambda,l_i)\le1$ and $\lim \limits_{l_i\rightarrow +\infty}v_i(\lambda,l_i)=0$;

  \item $v_i(\lambda,l_i)$ increase with respect to $\lambda$, and it holds that $\forall i \in \{1,2,\cdots,n\}$, $v_i(\lambda,l_i)\le1$ and $\lim \limits_{\lambda\rightarrow 0}v_i(\lambda,l_i)=0(l_i>0)$,
  \end{itemize}

where $\mathbf{v}(\lambda,\mathbf{l})=\arg \inf \limits_{\mathbf{v}\in [0,1]^n}\{\langle \mathbf{v},\mathbf{l} \rangle+R_{SP}(\mathbf{v},\lambda)\}$.
\end{definition}

By using such defined SP-regularizer, SPL can conduct the learning manner that imposes larger weights on easier samples while smaller on harder ones, and gradually increases the sample weights with the age parameter increasing.

\subsection{Conjugate theory of SP-regularizer}

We can prove the following conjugate result on a SP-regularizer $R_{SP}(\mathbf{v},\lambda)$ as follows:

\begin{theorem}[Conjugate Equivalence\label{Conjugate Equivalence}]  For arbitrary function $R_{SP}(\mathbf{v})$ satisfying $dom_\mathbf{v}\ R_{SP}(\mathbf{v}) \subset[0,1]^n$, let $g(\mathbf{v})=-R_{SP}(\mathbf{v})$, and then
$$\inf \limits_{\mathbf{v}\in [0,1]^n}\{\langle \mathbf{v},\mathbf{l} \rangle+R_{SP}(\mathbf{v})\}=\inf \limits_{\mathbf{v}\in [0,1]^n}\{\langle \mathbf{v},\mathbf{l} \rangle-g^{**}(\mathbf{v})\}=\inf \limits_{\mathbf{v}\in [0,1]^n}\{\langle \mathbf{v},\mathbf{l} \rangle+\overline{{R_{SP}}}(\mathbf{v})\},$$
where $\overline{{R_{SP}}}(\mathbf{v})=-g^{**}(\mathbf{v}).$
\end{theorem}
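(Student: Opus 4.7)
The plan is to recognize both infima as instances of the concave conjugate of $g=-R_{SP}$. First I would note that, under the paper's convention that a concave function takes value $-\infty$ outside its domain, $g(\mathbf{v})=-\infty$ for $\mathbf{v}\notin dom \ R_{SP}\subset [0,1]^n$. Hence $-g(\mathbf{v})=+\infty$ both on $[0,1]^n\setminus dom \ R_{SP}$ and outside $[0,1]^n$, so restricting the infimum to $[0,1]^n$ does not change its value:
\[
\inf_{\mathbf{v}\in [0,1]^n}\{\langle \mathbf{v},\mathbf{l}\rangle+R_{SP}(\mathbf{v})\}=\inf_{\mathbf{v}\in \mathcal{R}^n}\{\langle \mathbf{v},\mathbf{l}\rangle-g(\mathbf{v})\}=g^*(\mathbf{l}),
\]
the last step being precisely the definition of the concave conjugate.

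Next, because $g^{**}$ is concave and supported on the closed convex hull of $dom \ g$, which is itself contained in $[0,1]^n$, the same reasoning gives
\[
\inf_{\mathbf{v}\in [0,1]^n}\{\langle \mathbf{v},\mathbf{l}\rangle-g^{**}(\mathbf{v})\}=\inf_{\mathbf{v}\in \mathcal{R}^n}\{\langle \mathbf{v},\mathbf{l}\rangle-g^{**}(\mathbf{v})\}=(g^{**})^*(\mathbf{l})=g^{***}(\mathbf{l}).
\]
Therefore the theorem reduces to the single identity $g^*=g^{***}$.

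To establish this identity I would invoke the fact that $g^*$, being a pointwise infimum of the affine functions $\mathbf{l}\mapsto\langle \mathbf{v},\mathbf{l}\rangle-g(\mathbf{v})$, is automatically upper semi-continuous and concave; assuming $R_{SP}$ is proper (so that $g^*$ is too), the Duality property of concave conjugacy applied to $g^*$ in place of $g$ gives $(g^*)^{**}=g^*$, i.e.\ $g^{***}=g^*$. Chaining the displayed equations then yields the claim.

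The main obstacle I expect is more cosmetic than conceptual: one must verify the domain inclusion $dom \ g^{**}\subset [0,1]^n$ and the propriety of $g^*$, both of which follow from standard facts about the concave biconjugate together with the standing $\pm\infty$ convention introduced at the top of Section 2. Once these checks are in place, the statement collapses to the tautology that the third conjugate coincides with the first.
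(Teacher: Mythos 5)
Your proposal is correct and follows the route the paper itself sets up: both infima are unrestricted concave conjugates because $-g$ and $-g^{**}$ are $+\infty$ outside $[0,1]^n$, and the claim reduces to the standard identity $g^{*}=g^{***}$, obtained from $g^{**}\ge g$ together with the Duality property applied to the automatically closed concave function $g^{*}$. The only loose end is your properness assumption, which the theorem's ``arbitrary $R_{SP}$'' does not grant; but in the degenerate cases ($g^{*}\equiv-\infty$ or $g^{**}\equiv+\infty$) both sides collapse to $-\infty$ and the identity holds trivially, so nothing essential is missing.
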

The proof is provided in Appendix~B.

From the above theorem, it can be found that there are redundancy in the definition of SP-regularizer, which can be simplified as follows:

\begin{theorem}[SP-regularizer Simplification\label{SP-regularizer Simplification}] If $R_{SP}(\mathbf{v},\lambda)$ satisfies
\begin{itemize}
  \item $R_{SP}(\mathbf{v},\lambda)$ is strictly convex in $\mathbf{v}$;
  \item $R_{SP}(\mathbf{v},\lambda)$ is lower semi-continuous in $\mathbf{v}$;
  \item $dom_\mathbf{v} \ R_{SP}(\mathbf{v},\lambda)\subset[0,1]^n$ and $\textbf{0},\textbf{1}\in cl (dom_\mathbf{v} \ R_{SP}(\mathbf{v},\lambda))$,
\end{itemize}
then it holds that $\forall i \in \{1,2,\cdots,n\}$:
\begin{itemize}
 \item $v_i(\lambda,l_i)$ decrease with respect to $l_i$; $v_i(\lambda,l_i)\le1$ ; $\lim \limits_{l_i\rightarrow +\infty}v_i(\lambda,l_i)=0.$

 \item If $R_{SP}(\mathbf{v},\lambda)=\lambda R_{SP}(\mathbf{v})$ where $R_{SP}(\mathbf{v})$ satisfy the above condition in $\mathbf{v}$, then $\forall i \in \{1,2,\cdots,n\}$,
$v_i(\lambda,l_i)$ increases with respect to $\lambda$, $v_i(\lambda,l_i)\le1$, $\lim \limits_{\lambda\rightarrow 0}v_i(\lambda,l_i)=0 \ (l_i>0)$,
\end{itemize}where $\mathbf{v}(\lambda,\mathbf{l})=\arg \inf \limits_{\mathbf{v}\in [0,1]^n}\{\langle \mathbf{v},\mathbf{l} \rangle+R_{SP}(\mathbf{v},\lambda)\}$.
\end{theorem}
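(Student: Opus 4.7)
My approach is to set $g_\lambda(\mathbf{v}) := -R_{SP}(\mathbf{v},\lambda)$ and reduce every conclusion to a property of the gradient $\nabla g_\lambda^*$. The three hypotheses translate exactly to $-g_\lambda$ being closed (lower semi-continuous), strictly convex, and supported on a bounded subset of $[0,1]^n$. Corollary~\ref{Strictly Convexity implies Differentiability} then gives that the convex conjugate of $-g_\lambda$ is differentiable on all of $\mathcal{R}^n$; by Property~\ref{Relation of Concave Conjugate and Convex Conjugate} the same holds for the concave conjugate $g_\lambda^*$. Combining this with the equivalence of items (i), (ii), and (iv) of Property~\ref{DoS}, the (unique, by strict concavity of $g_\lambda$) minimizer of $\langle \mathbf{v},\mathbf{l}\rangle - g_\lambda(\mathbf{v})$ satisfies
\[
\mathbf{v}(\lambda,\mathbf{l}) \;=\; \nabla g_\lambda^*(\mathbf{l}),
\]
and all subsequent statements are to be read off from this single identity.

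For the first block of conclusions, the bound $v_i\le 1$ is immediate from $dom_\mathbf{v}\,R_{SP}(\cdot,\lambda)\subset[0,1]^n$. Monotonicity of $v_i$ in $l_i$ follows from the anti-monotone character of the gradient of a concave function: perturbing only the $i$-th coordinate of $\mathbf{l}$, the inequality $\langle \nabla g_\lambda^*(\mathbf{l}_1)-\nabla g_\lambda^*(\mathbf{l}_2),\mathbf{l}_1-\mathbf{l}_2\rangle\le 0$ collapses to $(v_{1i}-v_{2i})(l_{1i}-l_{2i})\le 0$. For the limit, I exploit $\mathbf{0}\in cl(dom_\mathbf{v}\,R_{SP})$ by fixing a sequence $\mathbf{v}^n\in dom_\mathbf{v}\,R_{SP}$ with $\mathbf{v}^n\to\mathbf{0}$; lower semi-continuity of $R_{SP}$ on the compact cube supplies a uniform lower bound $M$. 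Testing the optimality of $\mathbf{v}(\lambda,\mathbf{l})$ against each $\mathbf{v}^n$ and using non-negativity of losses isolates $v_i\,l_i \le v_i^n\,l_i + C_n$ with $C_n$ independent of $l_i$; dividing by $l_i$, sending $l_i\to+\infty$ first and then $n\to\infty$, forces $v_i\to 0$.

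For the second block (multiplicative age case $R_{SP}(\mathbf{v},\lambda)=\lambda R_{SP}(\mathbf{v})$), a change of variable in the infimum yields the rescaling identity $\mathbf{v}(\lambda,\mathbf{l})=\mathbf{v}(1,\mathbf{l}/\lambda)$. Monotonicity in $\lambda$ then reduces to monotonicity in $l_i$ from the first block: for non-negative losses, $\lambda\mapsto l_i/\lambda$ is decreasing, so $v_i$ increases in $\lambda$. The bound $v_i\le 1$ is inherited. For $\lim_{\lambda\to 0^+}v_i=0$ with $l_i>0$, the same comparison against $\mathbf{v}^n\to\mathbf{0}$ gives
\[
\langle \mathbf{v}(\lambda,\mathbf{l}),\mathbf{l}\rangle \;\le\; \langle \mathbf{v}^n,\mathbf{l}\rangle + \lambda\bigl(R_{SP}(\mathbf{v}^n)-M\bigr);
\]
letting $\lambda\to 0^+$ and then $\mathbf{v}^n\to\mathbf{0}$ drives the right-hand side to $0$, after which non-negativity of each $v_j l_j$ isolates $v_i l_i\to 0$, hence $v_i\to 0$.

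The technical core, and the main obstacle, is the $l_i\to+\infty$ limit: the comparison argument uses all three hypotheses jointly --- lower semi-continuity to bound $R_{SP}$ from below on the compact cube, strict convexity for uniqueness of $\mathbf{v}(\lambda,\mathbf{l})$ (so that $v_i(\lambda,\cdot)$ is well-defined as a function rather than as a set-valued map), and $\mathbf{0}\in cl(dom_\mathbf{v}\,R_{SP})$ to supply admissible probe points. A secondary subtlety is that the concave-conjugate gradient yields only coordinatewise monotonicity when a single $l_i$ is perturbed, so the claimed monotonicity in $\lambda$ cannot be extracted directly from the global anti-monotonicity of $\nabla g_\lambda^*$ but must be routed through the rescaling identity --- which is precisely where the multiplicative form of the age parameter enters.
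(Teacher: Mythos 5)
Your proof follows essentially the same route the paper's machinery is designed for: identifying the (unique, by strict convexity) minimizer with $\nabla g_\lambda^*(\mathbf{l})$ through Corollary~\ref{Strictly Convexity implies Differentiability} and Property~\ref{DoS} (this is precisely the content of Theorem~\ref{Relations}), extracting monotonicity in $l_i$ from anti-monotonicity of the gradient of a concave function, and treating the age parameter via the rescaling $\mathbf{v}(\lambda,\mathbf{l})=\mathbf{v}(\lambda^{-1}\mathbf{l})$, exactly as in Section~\ref{Influence of age parameter}. Your probe-point argument for the two limits is sound: lower semi-continuity on the compact cube does supply a finite lower bound $M$, the terms with $j\neq i$ are constants in $l_i$, and the divide-then-take-double-limit step is valid under the standing SPL convention $\mathbf{l}\succeq\mathbf{0}$, which you correctly flag as an input.

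The one step that does not close as written is the monotonicity in $\lambda$. The rescaling replaces $\mathbf{l}$ by $\lambda^{-1}\mathbf{l}$, which moves \emph{all} coordinates of the loss vector simultaneously, whereas the monotonicity you proved in the first block is strictly coordinatewise (perturb $l_i$, freeze $l_j$ for $j\neq i$). Concavity of $g_\lambda^*$ gives only $\langle \nabla g_\lambda^*(\mathbf{l}_1)-\nabla g_\lambda^*(\mathbf{l}_2),\mathbf{l}_1-\mathbf{l}_2\rangle\le 0$, and along the ray $\mathbf{l}_1=t\mathbf{l}$, $\mathbf{l}_2=s\mathbf{l}$ this controls only the aggregate $\sum_j(v_{1j}-v_{2j})l_j$, not each coordinate: the derivative of $v_i$ along the ray is $(H\mathbf{l})_i$ for a negative-semidefinite Hessian $H$ of $g_\lambda^*$, which can be positive for a coupled $R_{SP}$ (take $H$ negative definite with positive off-diagonal entries and $\mathbf{l}$ near a coordinate axis). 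So your closing remark that the rescaling identity resolves the coordinatewise subtlety is inverted --- the rescaling is what creates it. The conclusion does hold under the separability that the theorem's own notation $v_i(\lambda,l_i)$ presupposes (i.e.\ $R_{SP}(\mathbf{v})=\sum_i R_{SP,i}(v_i)$, so that each $v_i$ is a function of $l_i$ alone), which is also how the paper itself argues in Section~\ref{Influence of age parameter}; you should state that assumption explicitly at the point where you pass from the rescaling identity to monotonicity in $\lambda$.
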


The proof is presented in Appendix~C.

This theorem shows that the conditions in $\mathbf{l}$ can be implied by the conditions being directly imposed on the SP-regularizer.
According to simplification theorem, determining one easily handled representative in the equivalence class, the following assumption gives weaker conditions for a SP-regularizer.

\begin{definition}[SP-regularizer simplification]\label{assumption for SP-regularizer} $R_{SP}(\mathbf{v},\lambda)$ is called a self-paced regularizer with simplified conditions if:
\begin{enumerate}
  \item $R_{SP}(\mathbf{v},\lambda)$ is convex in $\mathbf{v}$;
  \item $R_{SP}(\mathbf{v},\lambda)$ is lower semi-continuous in $\mathbf{v}$;
  \item $dom_\mathbf{v} \ R_{SP}(\mathbf{v},\lambda)\subset[0,1]^n$ and $\textbf{0},\textbf{1}\in cl (dom_\mathbf{v} \ R_{SP}(\mathbf{v},\lambda))$.
\end{enumerate}
\end{definition}

\subsection{Model Equivalence}
Based on the concave conjugacy of SPL, its equivalent model can be derived as follows.
For convenience, let $g_\lambda(\mathbf{v})=-R_{SP}(\mathbf{v},\lambda)$, and then it holds that:
  \begin{eqnarray*}
    &&\inf_{f\in \mathcal{F},\mathbf{v}\in [0,1]^n}E(f,\mathbf{v};\lambda) \\
     &\Longleftrightarrow& \inf_{f\in \mathcal{F}}R_\mathcal{F}(f)+\inf_{\mathbf{v}\in [0,1]^n}\sum_{i=1}^{n}v_i L(f,z_i)+R_{SP}(\mathbf{v},\lambda) \\
     &\Longleftrightarrow&\inf_{f\in \mathcal{F}}g_\lambda^*(\mathbf{l}(f))+R_\mathcal{F}(f)\Longleftrightarrow\inf_{f\in \mathcal{F}}F_\lambda(\mathbf{l}(f))+R_\mathcal{F}(f)\label{eq spl model}
  \end{eqnarray*}
where $F_\lambda(\mathbf{l})=g_\lambda^*(\mathbf{l})$. According to the property of the concave conjugate, $F_\lambda(\mathbf{l})$ is a proper closed concave function. Through this analysis, we can try to get more insights of SPL.

\subsubsection{Latent SPL objective}

Mostly, we can separate a SPL optimization model to multiple $1$ dimension sub-problems: $$\inf \limits_{f\in\mathcal{F},\mathbf{v}\in [0,1]^n} E(f,\mathbf{v};\lambda)=\inf \limits_{f\in\mathcal{F},\mathbf{v}\in [0,1]^n}\{\sum_{i=1}^n (v_i l_i+R_{SPi}(v_i,\lambda))+R_\mathcal{F}(f)\}.$$

Then, the optimization on $\mathbf{v}$ can be reformulated as solving the following multiple subproblems on each of its component $v_i$: $$\inf \limits_{v\in [0,1]} E(w,v;\lambda)=\inf \limits_{v\in [0,1]}\{v l+R_{SP}(v,\lambda)\}.$$
We denote
$$v(\lambda,l)=\arg \inf \limits_{v\in [0,1]}\{\langle v,l \rangle+f(v,\lambda)\}$$

In \cite{Meng2015What}, it is proved that the alternative search algorithm on the SPL objective is equivalent to the MM algorithm implemented on a latent objective $$\int_0^l v(\lambda,j)\,dj$$ on l. We can get the similar result under concave conjugate theory as follows.

\begin{theorem}[Model Equivalence\label{Model Equivalence}] If $R_{SP}(v,\lambda)$ satisfy the simplified conditions of SPL as defined in \ref{assumption for SP-regularizer} and be strictly convex, then the latent SPL objective can be written as:
$$F_\lambda(l)=\int_0^l v(\lambda,j)\,dj+C(\lambda),$$
  where $C(\lambda)$ is a function in $\lambda$.
\end{theorem}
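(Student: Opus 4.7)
The plan is to identify the latent objective $F_\lambda(l) = g_\lambda^*(l)$ (with $g_\lambda = -R_{SP}(\cdot,\lambda)$) as an antiderivative of $v(\lambda,\cdot)$ in $l$, and then invoke the fundamental theorem of calculus. Because the SPL infimum separates across components, it suffices to argue in the scalar case $v,l \in \R$, where $v(\lambda,l)$ is by definition the unique minimizer of $vl + R_{SP}(v,\lambda)$ over $v \in [0,1]$.

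First I would establish that $F_\lambda$ is differentiable on all of $\R$. The companion convex function $-g_\lambda = R_{SP}(\cdot,\lambda)$ is strictly convex (by hypothesis), lower semi-continuous, and has effective domain contained in the bounded set $[0,1]$; Corollary~\ref{Strictly Convexity implies Differentiability} therefore asserts that the convex conjugate $(-g_\lambda)^*$ is closed and differentiable on the whole line. By Property~\ref{Relation of Concave Conjugate and Convex Conjugate}, $F_\lambda(l) = g_\lambda^*(l) = -(-g_\lambda)^*(-l)$, so differentiability of $F_\lambda$ on $\R$ follows. To identify the derivative I would appeal to Property~\ref{DoS}: since $v(\lambda,l)$ attains the infimum defining $g_\lambda^*(l)$, the second condition of the property holds at $z = v(\lambda,l)$, and by the equivalent fourth condition we obtain $v(\lambda,l) \in \partial g_\lambda^*(l) = \partial F_\lambda(l)$; differentiability collapses this subdifferential to the singleton $\{F_\lambda'(l)\}$, yielding $F_\lambda'(l) = v(\lambda,l)$.

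Finally, concavity of $F_\lambda$ makes the derivative $v(\lambda,\cdot)$ monotone nonincreasing and hence Riemann integrable on every compact interval, and concavity together with everywhere differentiability on $\R$ makes $F_\lambda$ locally Lipschitz and thus absolutely continuous. The fundamental theorem of calculus then gives
$$F_\lambda(l) - F_\lambda(0) = \int_0^l F_\lambda'(j)\,dj = \int_0^l v(\lambda,j)\,dj,$$
and setting $C(\lambda) := F_\lambda(0)$ closes the argument. The main obstacle I anticipate is the clean sign-bookkeeping needed to transfer differentiability from the convex-conjugate side, where Corollary~\ref{Strictly Convexity implies Differentiability} is phrased, to the concave conjugate $F_\lambda$, together with making genuine use of strict convexity of $R_{SP}$: without it, the duality of subdifferentials would only place $v(\lambda,l)$ inside $\partial F_\lambda(l)$ rather than pin it down as the derivative, and the pointwise identity $F_\lambda'(l) = v(\lambda,l)$ required for FTC would fail.
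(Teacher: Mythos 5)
Your argument is correct and follows exactly the route the paper's own machinery is built for: differentiability of $F_\lambda$ via Corollary~\ref{Strictly Convexity implies Differentiability} (with the sign transfer through Property~\ref{Relation of Concave Conjugate and Convex Conjugate}), identification $F_\lambda'(l)=v(\lambda,l)$ via Property~\ref{DoS}, and the fundamental theorem of calculus --- the same chain the paper summarizes in Theorem~\ref{Relations}. No gaps; the only point worth making explicit is that the argmin defining $v(\lambda,l)$ is attained (lower semi-continuity of $R_{SP}$ on the compact set $[0,1]$), which you implicitly use when invoking the second condition of Property~\ref{DoS}.
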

The proof is listed in Appendix~D.

\subsection{Relations}

In the following theorem, we want to make the relations among the SP-regularizer $R_{SP}(v,\lambda)$, latent objective $F_\lambda(l)$, and the weight function $v(\lambda,l)$ clear.

\begin{theorem}\label{Relations} If $R_{SP}(v,\lambda)$ satisfy the simplified conditions of SPL, then we have:
\begin{eqnarray*}
  l_\lambda(v) &=&\partial_v(-R_{SP}(v,\lambda)), \\
  v(\lambda,l) &=& l_\lambda^{-1}(l), \\
  v(\lambda,l) &=& \partial F_\lambda(l), \\
  F_\lambda(l) &=&  \langle v(\lambda,l),l\rangle+R_{SP}(v(\lambda,l),\lambda),  \\
  R_{SP}(v,\lambda) &=&  \langle v,l_\lambda(v)\rangle-R_{SP}(v,\lambda)(l_\lambda(v)).
  \end{eqnarray*}
  Furthermore, if $R_{SP}(v,\lambda)$ and $F_\lambda(l)$ is strictly convex in $v $ and $l$, respectively and we can further obtain that
  \begin{eqnarray*}
  F_\lambda(l) &=& \int_0^l v(\lambda,j)\,dj+C(\lambda),\\
  R_{SP}(v,\lambda) &=& -\int_0^v l_\lambda(j)\,dj+C(\lambda).
  \end{eqnarray*}
\end{theorem}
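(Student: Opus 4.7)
The plan is to read the five pointwise identities off the Duality of Subdifferential (Property~\ref{DoS}) once the theorem is embedded in the concave--conjugacy framework from Section~3.3. Set $g_\lambda(v) := -R_{SP}(v,\lambda)$; the simplified SPL conditions (convex and lower semi-continuous $R_{SP}(\cdot,\lambda)$, with domain contained in $[0,1]$ and containing $0,1$ in its closure) make $g_\lambda$ a closed proper concave function on $\mathcal{R}$, and the model-equivalence derivation already given in Section~3.3 identifies $F_\lambda$ with $g_\lambda^*$. The first-order condition for the inner argmin $v(\lambda,l) = \arg\inf_{v\in[0,1]}\{\langle v,l\rangle + R_{SP}(v,\lambda)\}$ reads $-l\in\partial R_{SP}(v(\lambda,l),\lambda)$, equivalently $l\in\partial g_\lambda(v(\lambda,l))$; so defining $l_\lambda(v):=\partial_v(-R_{SP}(v,\lambda))$ realises the first identity tautologically.

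The second and third identities then fall out of the equivalence chain in Property~\ref{DoS}: $l\in\partial g_\lambda(v) \Longleftrightarrow v\in\partial g_\lambda^*(l)=\partial F_\lambda(l)$. The forward direction gives $v(\lambda,l)\in\partial F_\lambda(l)$ (third identity); inverting the set-valued map $l_\lambda$ via that same correspondence gives $v(\lambda,l)=l_\lambda^{-1}(l)$ (second identity). The fourth and fifth identities are two readings of the Fenchel--Young equality $g_\lambda(v)+g_\lambda^*(l)=\langle v,l\rangle$ (the third bullet of Property~\ref{DoS}) whenever $l\in\partial g_\lambda(v)$: substituting $g_\lambda=-R_{SP}(\cdot,\lambda)$ and evaluating at the conjugate pair $(v(\lambda,l),l)$ rearranges to $F_\lambda(l)=\langle v(\lambda,l),l\rangle+R_{SP}(v(\lambda,l),\lambda)$, while evaluating at $(v,l_\lambda(v))$ and solving for $R_{SP}$ gives the fifth (where the right-most term in the statement is to be read as $F_\lambda(l_\lambda(v))$, the only interpretation consistent with the other identities).

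For the two integral formulas the plan is to upgrade to genuine differentiability via Corollary~\ref{Strictly Convexity implies Differentiability}: strict convexity of $R_{SP}(\cdot,\lambda)$ on the bounded domain $[0,1]$ puts us in the corollary's hypothesis, so $R_{SP}^*$ is differentiable on all of $\mathcal{R}$, and the sign-flipping relation $F_\lambda(l)=-R_{SP}^*(-l)$ from Property~\ref{Relation of Concave Conjugate and Convex Conjugate} transfers differentiability to $F_\lambda$. Combined with the third identity this forces $F_\lambda'(l)=v(\lambda,l)$, and a one-variable integration from $0$ to $l$ yields the first integral formula with $C(\lambda):=F_\lambda(0)$. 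A symmetric application, using the stated strictness of $F_\lambda$ (read as strict concavity, since $F_\lambda$ is concave) to obtain differentiability of $R_{SP}(\cdot,\lambda)$ via the same corollary applied to $-F_\lambda$, gives $R_{SP}'(v)=-l_\lambda(v)$ by the first identity and hence the second integral formula.

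The main obstacle I expect is the boundary behaviour at $v\in\{0,1\}$. Because the argmin is constrained to $[0,1]$, the subdifferential at an endpoint picks up the normal-cone contribution described in Property~\ref{SoS} that the naive derivative $l_\lambda$ does not see, and the inverse map $l_\lambda^{-1}$ is genuinely set-valued there. The assumption $\textbf{0},\textbf{1}\in \op{cl}(dom_v R_{SP})$ keeps the relative-interior hypothesis of Property~\ref{Additive Property} satisfied, so that the indicator of $[0,1]$ can be peeled off cleanly and the five identities hold on the interior (or in a set-valued sense at the endpoints); careful bookkeeping at these two points is where the proof does real work rather than just unpacking Property~\ref{DoS}.
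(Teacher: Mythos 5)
Your proposal is correct and follows essentially the same route as the paper: the five pointwise identities are read off the Duality of Subdifferential (Property~\ref{DoS}) applied to the conjugate pair $g_\lambda=-R_{SP}(\cdot,\lambda)$ and $F_\lambda=g_\lambda^*$, and the two integral formulas come from the differentiability granted by strict convexity followed by one-variable integration, which is exactly the content of Theorem~\ref{Model Equivalence} that the paper cites for this step. Your additional remarks --- the typo in the fifth identity (the last term should be $F_\lambda(l_\lambda(v))$) and the set-valued behaviour of $l_\lambda^{-1}$ at the endpoints $v\in\{0,1\}$ --- are accurate refinements of what the paper leaves implicit.
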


The theorem is directly got from the Duality of Subdifferential Properties \ref{DoS} and the latter two inequalities can be obtained based on Theorem \ref{Model Equivalence}.

According to Theorem \ref{Relations}, one can easily derive the weight function from the SP-regularizer through the differential and inverse step, which is empirically more convenient than through the arg-minimization analysis. We then discuss on how to specify the age parameter in the model.

\subsection{On age parameter\label{Influence of age parameter}}

An easy way to construct a SP-regularizer is first to generate a regularizer, denoted by $R_{SP}(\mathbf{v})$, satisfying the simplified conditions of SPL, and then use the SP-regularizer as $\lambda R_{SP}(\mathbf{v})$. The reason why it works can be interpreted as follows:

Let $g(\mathbf{v})=-R_{SP}(\mathbf{v})$ and let the concave conjugate of $g^*(\mathbf{l})=F(\mathbf{l})$. Then we have:
 $$F_\lambda(\mathbf{l})=(\lambda g(\mathbf{v}))^*=\inf \limits_{\mathbf{v}\in [0,1]^n}\{\langle \mathbf{v},\mathbf{l} \rangle-\lambda g(\mathbf{v})\}$$$$=\lambda \inf \limits_{\mathbf{v}\in [0,1]^n}\{<\mathbf{v},\lambda^{-1}\mathbf{l}>- g(\mathbf{v})\}=\lambda F(\lambda^{-1}\mathbf{l}).$$

For simplicity,  we assume $g(\mathbf{v})$ is strictly concave. As a result, $F(\mathbf{l})$ is differentiable and the original $\mathbf{v}(\mathbf{l})=\nabla F(\mathbf{l})$, and then we have: $$\mathbf{v}(\lambda,\mathbf{l})=\nabla_\mathbf{l} F_\lambda(\mathbf{l})=\lambda \nabla_\mathbf{l} F(\lambda^{-1}\mathbf{l})=\mathbf{v}(\lambda^{-1}\mathbf{l}).$$

Thus, $v_i(\lambda,l_i)$ increase with respect to $\lambda$, and it holds that $\forall i \in \{1,2,\cdots,n\}$, $\lim \limits_{\lambda\rightarrow 0}v_i(\lambda,l_i)=\lim \limits_{\lambda\rightarrow 0}v_i(\lambda^{-1}l_i)=0 $ and $\lim \limits_{\lambda\rightarrow +\infty}v_i(\lambda,l_i)=\lim \limits_{\lambda\rightarrow +\infty}v_i(\lambda^{-1}l_i)\le1$.

Besides, since  $v(\lambda,l)=v(\lambda^{-1}l)$, the change of the $\lambda$ stretches the shape of $v(\lambda,l)$. In particular, if the $v(l)$ is with threshold, then $v(\lambda,l)$ shifts the threshold through $\lambda$ which reflexes the change of decision boundary regarding learning or not.

Then we give a discussion on how to specify a proper age parameter in the learning process.

Generally the SP-regularizer has the data screening properties, that is, there exists some $\lambda^*$ such that $v(l\ge \lambda^*)=0$. One can use two ways for specifying the age parameter. The first is suggested by \cite{Kumar2010Self}: first to choose a $\lambda$ such that around half of example are used with positive weight, and then gradually increase the $\lambda$ to include more samples into training.
Another strategy is suggested in \cite{Jiang2014Easy}: first calculate
the loss of each example, and choose a age parameter such that a portion of samples with smaller loss is with positive weights and the other with zero weights; and then increase the portion number to implicitly increase the age parameter. Also some other variations~\cite{avramova2015curriculum} have also been discussed and can be considered in application.

\section{Two methods for designing a SPL regime\label{sec3}}

By utilizing the aforementioned theoretical results, we can construct two methods for designing a general SPL regime in practice.

We call the first method as the vFlR$\lambda$ method. The progress
for one dimension sub-problem is provided as follows:
\begin{enumerate}
  \item Design $v(l)$ satisfying
$v(l)$ decrease with respect to $l$ and
 $$\lim \limits_{l\rightarrow 0}v(l)=1 \ \ \lim \limits_{l\rightarrow +\infty}v(l)=0;$$
  \item $ F(l)=\int_0^l v(j)\,dj$;
  \item $l(v)=v^{-1}(v)$;
  \item $R_{SP}(v)=-\langle v,l(v)\rangle+F(l(v))$;
  \item  $R_{SP}(v,\lambda)=\lambda R_{SP}(v)$;
          $F_\lambda(l)=\lambda F(\lambda^{-1}l)$;
          $v(\lambda,l)=v(\lambda^{-1}l)$.
\end{enumerate}

If $F(l)$ is given then $v(l)=\partial F(l)$ and the other steps are the same.

We can then provide an example for designing SPL by using this method.

\begin{enumerate}
  \item $v(l)=(1-l)_{[0,1]}$;
  \item $ F(l)=\int_0^l v(j)\,dj=\min(l-\frac{l^2}{2},\frac{1}{2})$;
  \item $l(v)=v^{-1}(v)=\left\{\begin{aligned}
1-v &  & v\in (0,1], \\
[1,+\infty) &  & v=0;
\end{aligned}
\right.$
  \item $R_{SP}(v)=-\langle v,l(v)\rangle+F(l(v))$, whose component is computed by $\frac{(1-v)^2}{2}$;
  \item $R_{SP}(v,\lambda)=\lambda R_{SP}(v) =\frac{\lambda(1-v)^2}{2}$; $F_\lambda(l)=\lambda F(\lambda^{-1}l)=\min(l-\frac{l^2}{2\lambda},\frac{\lambda}{2})$; $v(\lambda,l)=v(\lambda^{-1}l)=(1-\frac{l}{\lambda})_{(0,\lambda)}$.
\end{enumerate}
 In this example, linear SP-regularizer\cite{Jiang2014Easy} is derivated from the weight function that linearly weights the sample whose loss is between $0$ and $\lambda$.

The second method is called the flvF$\lambda$ method. Its main process for one dimension sub-problem includes the following steps:
\begin{enumerate}
  \item $R_{SP}(v)$ satisfy:  \begin{itemize}
                               \item $dom\ R_{SP}(v) \subset [0,1]^n$;
                               \item $0,1\in cl(dom\ R_{SP}(v))$;
                               \item $R_{SP}(v)$ is convex and continuous;
                             \end{itemize}
  \item $l(v)=\partial(-R_{SP}(v))$;
  \item $v(l)=l^{-1}(v)$;
  \item $F(l)=\langle v(l),l\rangle+R_{SP}(v(l))$;
  \item   $R_{SP}(v,\lambda)=\lambda R_{SP}(v)$; $F_\lambda(l)=\lambda F(\lambda^{-1}l)$; $v(\lambda,l)=v(\lambda^{-1}l)$.
\end{enumerate}

We also present an example for using this method to design SPL.

\begin{enumerate}
  \item $R_{SP}(v)=-\log v \ v\in(0,1]$;
  \item $l(v)=\partial (-R_{SP}(v))=\left\{\begin{aligned}
v^{-1} &  & v\in (0,1), \\
(-\infty,1] &  & v=1;
\end{aligned}
\right.$
  \item $v(l)=\min(1,l^{-1})$;
  \item $F(l)=\langle v(l),l\rangle +R_{SP}(v(l))=\left\{\begin{aligned}
1+\log l, &  & l\in (1,+\infty), \\
l, &  & l\in (-\infty,1];
\end{aligned}
\right.$
  \item \begin{itemize}
          \item $R_{SP}(v,\lambda)=\lambda R_{SP}(v)=-\lambda \log v  \ v\in(0,1]$;
          \item $F_\lambda(l)=\lambda F(\lambda^{-1}l)=\left\{\begin{aligned}
\lambda+\log l-\log \lambda &  & l\in (\lambda,+\infty), \\
l &  & l\in (-\infty,\lambda];
\end{aligned}
\right.$
          \item $v(\lambda,l)=v(\lambda^{-1}l)=\min(1,\lambda l^{-1})$.
        \end{itemize}
\end{enumerate}
 In this example,  the weight function, which weights the sample by the minimal of 1 and $\lambda$ times its loss reciprocal, is derived from the LOG-like SP-regularizer.
\section{Concave conjugate theory for SPCL\label{sec4}}

In the conventional SPCL strategies, a curriculum region needs to be specified and added into a general SPL optimization as a constraint~\cite{Jiang2015Self}. In this way, however, the latent objective of SPL as deduced in the previous sections is changed and cannot be obtained by the previous theory. We thus attempt to discuss this point, and provide explicit latent objective functions underlying SPCL for two specific curriculums. For notation convenience, in the following we omit $\lambda$ in SPL functions.

\subsection{Latent objective of SPCL}
In the following theorem we propose the form of the latent objective underlying SPCL.

\begin{theorem}\label{New Object Function of Curriculum Region}
  Suppose the self-paced regularizer is $R_{SP}(\mathbf{v})$ satisfying the simplified conditions of SPL. Let $F(\mathbf{l})=\inf \limits_{\mathbf{v}\in \mathcal{R}^n} \{\langle \mathbf{v},\mathbf{l} \rangle+R_{SP}(\mathbf{v})\}$ denote the concave conjugate of $-R_{SP}(\mathbf{v})$ in $\mathbf{v}$. $\Psi$ is closed convex set and $ri([0,1]^n\cap \Psi)\neq \emptyset$ and $\delta(\mathbf{v}|\Psi)$ is the indicator function. Then

 $$F^{new} (\mathbf{l})\triangleq \inf \limits_{\mathbf{v}\in \Psi} \{\langle \mathbf{v},\mathbf{l} \rangle+R_{SP}(\mathbf{v})\}=F\oplus \delta^*(\cdot|\Psi)(\mathbf{l}),$$
and
 $$\inf \limits_{f\in\mathcal{F},\mathbf{v}\in [0,1]^n\cap \Psi} E(f,\mathbf{v})=\inf_{f\in \mathcal{F}}\{R_\mathcal{F}(f)+F^{new}(\mathbf{l}(f))\}.$$
\end{theorem}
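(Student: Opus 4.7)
The plan is to treat the curriculum constraint $\mathbf{v}\in\Psi$ as the addition of a concave indicator function to $-R_{SP}(\mathbf{v})$, and then to peel off this addition using the Additive Property of concave conjugacy (Property~\ref{Additive Property}), which converts sums into sup-convolutions of conjugates. Once the first identity holds, the second is immediate, since the outer $f$-minimization decouples from the inner $\mathbf{v}$-minimization and the $[0,1]^n$ factor of the constraint is already subsumed by $dom\,R_{SP}$.

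For the first equality, set $g(\mathbf{v})=-R_{SP}(\mathbf{v})$; by the simplified conditions of Definition~\ref{assumption for SP-regularizer}, $g$ is upper semi-continuous, proper, and concave. Using that $\delta(\mathbf{v}|\Psi)=0$ on $\Psi$ and $-\infty$ elsewhere, the constraint is absorbed into the objective:
$$\inf_{\mathbf{v}\in\Psi}\{\langle\mathbf{v},\mathbf{l}\rangle+R_{SP}(\mathbf{v})\} = \inf_{\mathbf{v}\in\mathcal{R}^n}\{\langle\mathbf{v},\mathbf{l}\rangle-(g(\mathbf{v})+\delta(\mathbf{v}|\Psi))\} = (g+\delta(\cdot|\Psi))^{*}(\mathbf{l}).$$
Applying the Additive Property with $g_1=g$ and $g_2=\delta(\cdot|\Psi)$ then yields $(g+\delta(\cdot|\Psi))^{*} = g^{*}\oplus\delta^{*}(\cdot|\Psi) = F\oplus\delta^{*}(\cdot|\Psi)$, which is exactly $F^{new}(\mathbf{l})$.

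For the second equality, I would pull $R_\mathcal{F}(f)$ (which does not depend on $\mathbf{v}$) outside the inner infimum, leaving $\inf_{\mathbf{v}\in[0,1]^n\cap\Psi}\{\langle\mathbf{v},\mathbf{l}(f)\rangle+R_{SP}(\mathbf{v})\}$. Since $dom\,R_{SP}\subset[0,1]^n$ by the simplified conditions, the $[0,1]^n$ constraint is already enforced by $R_{SP}$ itself, so this inner infimum equals $\inf_{\mathbf{v}\in\Psi}\{\langle\mathbf{v},\mathbf{l}(f)\rangle+R_{SP}(\mathbf{v})\}=F^{new}(\mathbf{l}(f))$ by the first part.

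The main obstacle will be verifying the hypothesis of the Additive Property — namely that $ri(dom\,g)$ and $ri(\Psi)$ share a point — given only the stated assumption $ri([0,1]^n\cap\Psi)\neq\emptyset$. The bridge is to show that the closure of $dom\,R_{SP}$ coincides with $[0,1]^n$ so that $ri(dom\,R_{SP})=ri\,[0,1]^n$, under which the two conditions agree. This is transparent for the usual separable SP-regularizers, and more generally should follow from convexity and lower semi-continuity combined with the corner-point membership $\mathbf{0},\mathbf{1}\in cl(dom\,R_{SP})$, but pinning down this gap between the stated axioms and the required relative-interior identity is the one subtle step.
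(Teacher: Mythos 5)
Your proof follows exactly the paper's own argument: absorb the constraint $\mathbf{v}\in\Psi$ via the indicator function, rewrite the constrained infimum as the concave conjugate of $-R_{SP}+\delta(\cdot|\Psi)$, and invoke the Additive Property to obtain the sup-convolution $F\oplus\delta^*(\cdot|\Psi)$. You are in fact more careful than the paper, whose proof is a bare chain of equalities that never checks the relative-interior hypothesis of Property~\ref{Additive Property} which you correctly identify as the one step needing justification from the stated assumption $ri([0,1]^n\cap\Psi)\neq\emptyset$.
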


\begin{proof}
   $$\inf \limits_{\mathbf{v}\in \Psi} \{\langle \mathbf{v},\mathbf{l} \rangle+R_{SP}(\mathbf{v})\}=\inf \limits_{\mathbf{v}\in \mathcal{R}^n} \{\langle \mathbf{v},\mathbf{l} \rangle+R_{SP}(\mathbf{v})-\delta(\mathbf{v}|\Psi)\}$$ $$=(-R_{SP}(\mathbf{v})+\delta(\mathbf{v}|\Psi))^*=F\oplus \delta^*(\cdot|\Psi)(\mathbf{l}).$$
\end{proof}

From the theorem, we know that the latent objective of SPCL under certain curriculum region $\Psi$ is the sup convolution of the original SPL latent objective without this constraint and the support function on it. There are several properties on this new objective $F^{new} (\mathbf{l})$.
\begin{property} If the conditions of the theorem~\ref{New Object Function of Curriculum Region} hold, then $F^{new}(\mathbf{l})$ has the following properties.

  \begin{itemize}
  \item It is upper semi-continuous and concave since it is the concave conjugate.
  \item It is increasing according to the Theorem~\ref{Monotone Conjugate}.
  \item  $F^{new} (\mathbf{l})\ge \max(F(\mathbf{l}), \delta^*(\mathbf{l}|\Psi))$ due to the property of sup convolution and the fact that $\delta^*(\mathbf{0}|\Psi))\ge 0 $.
\end{itemize}
Moreover, if $R_{SP}(\mathbf{v})$ is strictly convex, it yields that

\begin{itemize}
  \item According to Corollary~\ref{Strictly Convexity implies Differentiability},  $F^{new} (\mathbf{l})$ is differentiable.
\end{itemize}

\end{property}

\subsection{Curriculum function}
Through the above discussion, we may find that the curriculum region can be interpreted as a special family of curriculum function.

Suppose we provide the SPL model by adding a curriculum function $R_{CL}(\mathbf{v})$ which is a closed convex function and satisfies $ri(dom \ R_{CL})\cap ri(dom \ R_{SP})\neq \emptyset$. Then the new latent objective function can be obtained by the following:
$$F^{new}(\mathbf{l})=\inf \limits_{\mathbf{v}\in [0,1]^n}\{\langle \mathbf{v},\mathbf{l} \rangle +R_{SP}(\mathbf{v})+R_{CL}(\mathbf{v})\}=F\oplus(-R_{CL})^*(\mathbf{l}).$$
It can be seen that the curriculum properties depends on the conjugate of the curriculum function and the sup convolution step.

Suppose we have $K$ curriculum functions which are proper closed convex functions, and let $R_{CL0}$ denote $R_{SP}$. If they satisfy $\cap_{i=0}^K ri(dom \ R_{CLi})\neq \emptyset$, then according to Property~\ref{Additive Property} the objective function of SPCL is
$$F^{new}(\mathbf{l})=\oplus_{i=0}^K(-R_{CLi})^*(\mathbf{l}).$$

By introducing a new curriculum function $R_{CL}$ into the model, new latent objective is obtained by sup convolution of original object function and conjugate of the curriculum function. The result can be viewed as the action of the new curriculum on the original latentive objective. We call this action \textbf{Curriculum Action} in the follows for convenience.

\subsection{Basic curriculum region}

Consider the following case that the feasible region of $\mathbf{v}$ is $\mathcal{R}^n$ and the SP-regularizer is $0$, and then $$\inf\limits_{\mathbf{v}\in \mathcal{R}^n}\langle \mathbf{v},\mathbf{l} \rangle=\delta(\mathbf{l}|\mathbf{0}),$$ which means that it takes finite value $0$ when the component of $\mathbf{l}$ equals 0 and it takes $-\infty$ on $\mathcal{R}^n\setminus 0$.

For all proper concave function $f(\mathbf{l})$, it holds that $$f(\mathbf{l})\oplus \delta(\mathbf{l}|\mathbf{0})=f(\mathbf{l}).$$ We can then give the following definition related to curriculums:

\begin{definition}[Basic Curriculum Region] For the SPL model $$\inf\limits_{\mathbf{v}\in \mathcal{R}^n}\langle \mathbf{v},\mathbf{l} \rangle +R_{SP}(\mathbf{v}),$$ we call the $dom\ R_{SP}(\cdot)$ the basic curriculum region.
\end{definition}

The commonly discussed SP-regularizers are defined on $[0,1]^n$. Suppose the regularizer $g(v)=-R_{SP}(v)$ is a concave function being differentiable on $[0,1]^n$, and it can be extended to an open set which contains $[0,1]^n$. According to Property~\ref{SoS} the structure of subdifferential, we can obtain $$\partial g(\mathbf{v})=\left\{\begin{array}{cccc}
\nabla g(\textbf{1})+\mathcal{R}^n_{-} & \mathbf{v}=\textbf{1} \\
\nabla g(\textbf{$a^i$})+< b^i_1 e^1,\cdots, b^i_n e^n> & \mathbf{v}=a^i\in V([0,1]^n)\\
\nabla g(\textbf{0})+\mathcal{R}^n_{+} & \mathbf{v}=\textbf{0} \\
\nabla g(\mathbf{v}) +K(\mathbf{v}) & \mathbf{v}\in \partial[0,1]^n/\{V([0,1])^n\}\\
\nabla g(\mathbf{v}) & \mathbf{v} \in (0,1)^n\\
\end{array}
\right.$$where $a^i$ is the vertex of the hypercube $[0,1]^n$ , $b^i_j=\left\{\begin{array}{ccc}
1 & a^i_j=0\\
-1 & a^i_j=1 \\
\end{array}
\right.$, $\langle b^i_1 e^1,$ $\cdots, b^i_n e^n\rangle$ represents the cone generated by $ b^i_1 e^1,\cdots, b^i_n e^n$ with positive coefficients and $V([0,1]^n)$ represents all the vertices of $[0,1]^n$.

By calculating the inverse of set-valued function $\partial g(\mathbf{v})$, the weight set-valued function $\mathbf{v}(\mathbf{l})$ can be obtained.

\subsubsection{Linear Regularizer}
\begin{definition}[Linear Regularizer]\label{lr}
We call $$R_{SP}(\mathbf{v})=-\lambda^T \mathbf{v}$$ linear regularizer for the SPL model
\end{definition}
Once we select the linear regularizer, we can obtain: $$-R_{SP}(\mathbf{v})=\mathbf{\lambda}^T \mathbf{v}$$
$$\partial (-R_{SP})(\mathbf{v})=\left\{\begin{array}{cccc}
\mathbf{\lambda}+\mathcal{R}^n_{-} & \mathbf{v}=\textbf{1}, \\
\mathbf{\lambda}+< b^i_1 e^1,\cdots, b^i_n e^n> & \mathbf{v}=a^i\in V([0,1]^n)),\\
\mathbf{\lambda}+\mathcal{R}^n_{+} & \mathbf{v}=\textbf{0}, \\
\mathbf{\lambda} +K(\mathbf{v}) & \mathbf{v}\in \partial[0,1]^n/\{V([0,1])^n\},\\
\mathbf{\lambda} & \mathbf{v} \in (0,1)^n.\\
\end{array}
\right.$$
According to the Property~\ref{DoS}, we can obtain that$$\partial F(\mathbf{l})=\left\{\begin{array}{cccc}
\mathbf{v}=\textbf{1}  & \mathbf{l}\in \lambda+(\mathcal{R}^n_{-})^\circ \\
\mathbf{v}=a^i             & \mathbf{l}\in\lambda+< b^i_1 e^1,\cdots, b^i_n e^n>^\circ & \mathbf{v}=a^i\in V([0,1]^n))\\
\mathbf{v}=\textbf{0} & \mathbf{l}\in\lambda +(\mathcal{R}^n_{+})^\circ \\
\cdots
\end{array}
\right.
$$
Hence, the domain of $\partial F(\mathbf{\mathbf{l}})=v(\mathbf{l})$ can be separated into $2^n$ part, each taking the same value corresponding to the vertex of the hypercube $[0,1]^n$.

\subsection{Linear homogeneous curriculum}

One of the most commonly used curriculum is the partial order curriculum. For instance, if one has the prior knowledge that example 1 is more important or reliable than example 2, it's reasonable to restrict their feasible region such that $v_1\ge v_2$. In regard to $v_1-v_2\ge0$, we call it linear homogeneous curriculum. Generally, those knowledge come as a series of linear inequalities and we call them partial order curriculum. For simplicity, in the following we consider the simple linear homogeneous curriculum and, for more curriculums, we can treat them one by one.

In order to avoid the disfunctional curriculum and to make analysis convenient, we render the following nonsingular assumption for the curriculum region.

\begin{assumption}[Assumption for Curriculum Region\label{Assumption for Curriculum Region}] A curriculum region $\Psi$ satisfies the following conditions:
 \begin{itemize}
  \item $int(\Psi) \cap int(dom \ R_{SP})\neq \emptyset,$
  \item $\Psi\cap dom\  R_{SP}\neq dom\  R_{SP}.$
\end{itemize}
\end{assumption}

\begin{definition}[Linear Homogeneous Curriculum] If $\Psi=\{\mathbf{v}|\mathbf{v}^T \mathbf{k}\ge0\}$, we call $\Psi$ a linear homogeneous curriculum and $\mathbf{k}$ the linear homogeneous curriculum direction.
\end{definition}

We can then prove the following result:
\begin{theorem}\label{Geometric Effect of Linear Homogeneous Curriculum}
   Suppose $R_{SP}(v)$ satisfies Definition~\ref{assumption for SP-regularizer} and the curriculum as $\mathbf{v}^T \mathbf{k}\ge 0$ corresponding to $\Psi=\{\mathbf{v}|\mathbf{v}^T \mathbf{k}\ge0\}$. If $\Psi$ satisfies Assumption~\ref{Assumption for Curriculum Region}, then we have:
   $$F^{new} (\mathbf{l})=F\oplus \delta(\cdot|\Psi^\circ)(\mathbf{l})=\sup \limits_{\mathbf{l}^1+\mathbf{l}^2=\mathbf{l}}\{F(\mathbf{l}^1))+\delta(\mathbf{l}^2|\Psi^\circ)\}=\sup \limits_{\mathbf{l}^1\in \mathbf{l}-ray_\mathbf{k}} F(\mathbf{l}^1 \rangle\ge 0\}$$ is another non-empty closed convex cone and  $ray_\mathbf{k}$ denotes the ray starting from the origin in direction $\mathbf{k}$.
\end{theorem}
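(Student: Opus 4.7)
The plan is to reduce the statement to a direct application of Theorem~\ref{New Object Function of Curriculum Region}, followed by an explicit evaluation of the support function of the half-space $\Psi$ and a routine unpacking of the resulting sup-convolution. Since the nonsingular Assumption~\ref{Assumption for Curriculum Region} supplies the required relative-interior overlap (so that $ri([0,1]^n \cap \Psi) \neq \emptyset$), Theorem~\ref{New Object Function of Curriculum Region} immediately gives the first identity $F^{new}(\mathbf{l}) = F \oplus \delta^*(\cdot|\Psi)(\mathbf{l})$. Thus the real content of the proof is to identify $\delta^*(\cdot|\Psi)$ concretely, and to rewrite the sup-convolution in a form that only involves translates of $\mathbf{l}$ along $\mathbf{k}$.

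First I would compute $\delta^*(\cdot|\Psi)$. By definition $\delta^*(\mathbf{l}|\Psi) = \inf_{\mathbf{v}\in\Psi}\langle \mathbf{v},\mathbf{l}\rangle$, and because $\Psi=\{\mathbf{v}:\mathbf{v}^{T}\mathbf{k}\ge 0\}$ is a closed convex cone, this infimum is either $0$ (when $\langle \cdot,\mathbf{l}\rangle$ is non-negative on $\Psi$) or $-\infty$. Decomposing each $\mathbf{v}\in\Psi$ into a non-negative multiple of $\mathbf{k}$ plus an arbitrary element of $\mathbf{k}^{\perp}$, one checks that $\langle \mathbf{v},\mathbf{l}\rangle\ge 0$ for every $\mathbf{v}\in\Psi$ if and only if $\mathbf{l}=t\mathbf{k}$ with $t\ge 0$. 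Hence $\delta^*(\cdot|\Psi)=\delta(\cdot|\Psi^{\circ})$ with $\Psi^{\circ}=ray_{\mathbf{k}}$, which is visibly a non-empty closed convex cone.

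Substituting this identification into the sup-convolution and using that $\delta(\mathbf{l}^{2}|ray_{\mathbf{k}})$ equals $0$ on $ray_{\mathbf{k}}$ and $-\infty$ elsewhere, the constraint $\mathbf{l}^{1}+\mathbf{l}^{2}=\mathbf{l}$ collapses to $\mathbf{l}^{1}\in \mathbf{l}-ray_{\mathbf{k}}$, yielding
\[
 F \oplus \delta(\cdot|\Psi^{\circ})(\mathbf{l}) = \sup_{\mathbf{l}^{1}+\mathbf{l}^{2}=\mathbf{l}}\{F(\mathbf{l}^{1})+\delta(\mathbf{l}^{2}|\Psi^{\circ})\} = \sup_{\mathbf{l}^{1}\in \mathbf{l}-ray_{\mathbf{k}}} F(\mathbf{l}^{1}),
\]
which is exactly the claimed geometric formula.

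The only delicate step is the polar-cone calculation of the concave support function $\delta^*(\cdot|\Psi)$: because the concave conjugate uses $\inf$ rather than $\sup$, the dual cone appearing here is $\{\mathbf{l}:\langle\mathbf{v},\mathbf{l}\rangle\ge 0\ \forall\mathbf{v}\in\Psi\}$ rather than the more familiar $\le 0$ version, and it is easy to slip a sign. Beyond that, the passage from the latent-SPCL identity to the sup-convolution form, and the final collapse into a single-variable sup over the translated ray, are formal unpackings that only invoke Theorem~\ref{New Object Function of Curriculum Region} and the definition of $\oplus$.
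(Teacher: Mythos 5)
Your proof is correct and follows the same route the paper takes: invoke Theorem~\ref{New Object Function of Curriculum Region} (its relative-interior hypothesis being supplied by Assumption~\ref{Assumption for Curriculum Region}), identify the support function $\delta^*(\cdot|\Psi)$ of the half-space as the indicator $\delta(\cdot|ray_{\mathbf{k}})$ of the dual cone $\Psi^{\circ}=\{\mathbf{l}:\langle\mathbf{v},\mathbf{l}\rangle\ge 0\ \forall\,\mathbf{v}\in\Psi\}$, and collapse the sup-convolution to a supremum over $\mathbf{l}-ray_{\mathbf{k}}$. Your dual-cone computation (including the sign flip caused by the concave, $\inf$-based support function, which is the only delicate point) is carried out correctly.
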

Proof in Appendix~E.

Theorem~\ref{Geometric Effect of Linear Homogeneous Curriculum} illustrates that the latent objective of SPCL is the supremum of the original objective function of SPL without curriculum constraint on the ray which starts from $\mathbf{l}$ to the direction $-\mathbf{k}$.
We then give the theorem on the action of linear homogeneous curriculum.

\begin{theorem}[Action of Linear Homogeneous Curriculum\label{Influence of Linear Homogeneous Curriculum}]
  Suppose $R_{SP}(\mathbf{v})$ is essential strictly convex and satisfies Definition~\ref{assumption for SP-regularizer}. Suppose we have the curriculum constraint $\mathbf{v}^T \mathbf{k}\ge 0$, corresponding to the curriculum region $\Psi=\{\mathbf{v}|\mathbf{v}^T \mathbf{k}\ge0\}$. Then if $\Psi$ satisfies assumption~\ref{Assumption for Curriculum Region},
it holds that $\nabla F^{new}(\mathbf{l})^T\mathbf{k}=v^{new} (\mathbf{l})^T\mathbf{k} \ge 0$ and $$F^{new} (\mathbf{l})=\left \{                 
  \begin{array}{ccc}   
    F(\mathbf{l}) & \mathbf{l}\in \partial (-R_{SP})(\mathbf{k}^\perp)-ray_\mathbf{k}, \\  
    \sup\limits_{\mathbf{l}'\in \mathbf{l}+line_\mathbf{k}}F(\mathbf{l}')(\ge F(\mathbf{l})) & \mathbf{l}\in \partial (-R_{SP})(\mathbf{k}^\perp)+ray_\mathbf{k}. \\  
  \end{array}
\right.$$.
\end{theorem}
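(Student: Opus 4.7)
The plan is to build directly on Theorem~\ref{Geometric Effect of Linear Homogeneous Curriculum}, which gives $F^{new}(\mathbf{l})=\sup_{s\ge 0}F(\mathbf{l}-s\mathbf{k})$. Setting $\phi(s)=F(\mathbf{l}-s\mathbf{k})$, this is a one-variable concave function, and since $R_{SP}$ is essentially strictly convex with $dom\ R_{SP}\subset[0,1]^n$ bounded, Theorem~\ref{Duality of essential strictly concave and essentially smooth} implies $F=(-R_{SP})^*$ is essentially smooth. Hence $\phi$ is differentiable with $\phi'(s)=-\nabla F(\mathbf{l}-s\mathbf{k})^T\mathbf{k}=-v(\mathbf{l}-s\mathbf{k})^T\mathbf{k}$ on $int(dom\ F)$. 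The maximum of the concave $\phi$ on $[0,\infty)$ is thus attained either at $s=0$, which happens precisely when $\phi'(0)\le 0$, i.e.\ $v(\mathbf{l})^T\mathbf{k}\ge 0$, or at some interior $s^*>0$ where $\phi'(s^*)=0$, i.e.\ $v(\mathbf{l}-s^*\mathbf{k})^T\mathbf{k}=0$.

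Next I would translate these two analytic conditions into the geometric language of the theorem. By the Duality of Subdifferentials (Property~\ref{DoS}), a loss vector $\mathbf{l}_0$ satisfies $v(\mathbf{l}_0)\in\mathbf{k}^\perp$ if and only if $\mathbf{l}_0\in\partial(-R_{SP})(\mathbf{v})$ for some $\mathbf{v}\in\mathbf{k}^\perp$, i.e.\ $\mathbf{l}_0\in\partial(-R_{SP})(\mathbf{k}^\perp)$. To identify Case~1 with $\mathbf{l}\in\partial(-R_{SP})(\mathbf{k}^\perp)-ray_\mathbf{k}$, I would argue both directions: if $\mathbf{l}=\mathbf{l}_0-s\mathbf{k}$ with $s\ge 0$ and $v(\mathbf{l}_0)^T\mathbf{k}=0$, then since $t\mapsto v(\mathbf{l}_0-t\mathbf{k})^T\mathbf{k}$ is non-decreasing (being minus the derivative of the concave $\phi_{\mathbf{l}_0}$), $v(\mathbf{l})^T\mathbf{k}\ge 0$ follows; conversely, if $v(\mathbf{l})^T\mathbf{k}\ge 0$ then $t\mapsto v(\mathbf{l}+t\mathbf{k})^T\mathbf{k}$ is non-increasing and, using boundedness of $v(\cdot)\in[0,1]^n$ together with Assumption~\ref{Assumption for Curriculum Region}, must hit $0$ at some $t^*\ge 0$, placing $\mathbf{l}+t^*\mathbf{k}\in\partial(-R_{SP})(\mathbf{k}^\perp)$. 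The analogous argument identifies Case~2 with $\partial(-R_{SP})(\mathbf{k}^\perp)+ray_\mathbf{k}$. Finally, by concavity of $\phi$ the supremum over $s\ge 0$ coincides with the supremum over all $s\in\mathcal{R}$ in Case~2, upgrading the half-line sup of Theorem~\ref{Geometric Effect of Linear Homogeneous Curriculum} to the full-line form $\sup_{\mathbf{l}'\in\mathbf{l}+line_\mathbf{k}}F(\mathbf{l}')$ claimed in the theorem. The inequality $v^{new}(\mathbf{l})^T\mathbf{k}\ge 0$ is then immediate: Case~1 gives $v^{new}(\mathbf{l})=v(\mathbf{l})$ with $v(\mathbf{l})^T\mathbf{k}\ge 0$ by the case hypothesis, and Case~2 selects $s^*$ so that $v^{new}(\mathbf{l})=v(\mathbf{l}-s^*\mathbf{k})$ satisfies $v^{new}(\mathbf{l})^T\mathbf{k}=0$.

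The main obstacle I anticipate is the existence claim in Case~2 together with ensuring that the two regions $\partial(-R_{SP})(\mathbf{k}^\perp)\pm ray_\mathbf{k}$ actually cover the relevant $\mathbf{l}$'s. One must guarantee that the non-increasing map $t\mapsto v(\mathbf{l}+t\mathbf{k})^T\mathbf{k}$ attains the value $0$ at a finite $t^*$ rather than only approaching it asymptotically. This is where Assumption~\ref{Assumption for Curriculum Region} is essential: the nonsingularity $\Psi\cap dom\ R_{SP}\ne dom\ R_{SP}$ prevents $\mathbf{k}$ from being trivially aligned with the feasible set, while $int(\Psi)\cap int(dom\ R_{SP})\ne\emptyset$ supplies interior feasible weights. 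Combined with $v(\cdot)\in[0,1]^n$ and the essential smoothness of $F$, these furnish the compactness and regularity needed to locate $s^*$ in the interior of $[0,\infty)$ and invoke Property~\ref{DoS} for the optimality characterization.
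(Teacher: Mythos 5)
Your proposal is correct and follows essentially the route the paper takes: reduce $F^{new}(\mathbf{l})=\sup_{s\ge 0}F(\mathbf{l}-s\mathbf{k})$ from Theorem~\ref{Geometric Effect of Linear Homogeneous Curriculum} to a one-dimensional concave maximization, use essential strict convexity of $R_{SP}$ (via Theorem~\ref{Duality of essential strictly concave and essentially smooth} and Corollary~\ref{Strictly Convexity implies Differentiability}) to get differentiability of $F$, split on the sign of $\nabla F(\mathbf{l})^T\mathbf{k}$, and translate back through Property~\ref{DoS}. The only step you assert rather than prove is the existence of a finite $t^*$ with $v(\mathbf{l}+t^*\mathbf{k})^T\mathbf{k}=0$; boundedness of $v(\cdot)$ alone does not give this (a bounded non-increasing continuous function can stay strictly positive), and the correct mechanism is that Assumption~\ref{Assumption for Curriculum Region} supplies points of $dom\,R_{SP}$ strictly on both sides of $\mathbf{k}^\perp$, whence the linear upper bounds $F(\mathbf{l}+t\mathbf{k})\le\langle \mathbf{v}_0,\mathbf{l}+t\mathbf{k}\rangle+R_{SP}(\mathbf{v}_0)$ force $t\mapsto F(\mathbf{l}+t\mathbf{k})$ to tend to $-\infty$ in both directions, so the concave section attains its maximum at a finite interior point where its derivative vanishes.
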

The proof is presented in Appendix~F.

Theorem~\ref{Influence of Linear Homogeneous Curriculum} illustrates the form of the latent objective of SPCL following the restriction imposed on the curriculum region. This naturally leads to the following concept of the critical region:
We call $(-R_{SP})(k^\perp)$ the critical region for the new latent objective of SPCL.

According to Theorem~\ref{Influence of Linear Homogeneous Curriculum}, the most important thing for determining $F^{new}(l)$ is to determine the critical region $(-R_{SP})(k^\perp)$, since the critical region divides the $\mathcal{R}^n$ into two parts.

On one part, the linear homogeneous curriculum has no effects. On the other part, the more increase in the curriculum direction, the bigger penalization of the new latent function on the loss.

\subsubsection{Partial Order Curriculum}

We can then evaluate the insights of SPCL with partial order curriculum, where $v_1\ge v_2$ encoding the prior knowledge that example 1 is more important and reliable than example 2. The SP-regularizer is chosen to be the exponential $R_{SP}(v)=v\log v-v+1$.
We can then deduce the following results by using the aforementioned theoretical results:
\begin{itemize}
\item The original SPL latent objective (without this curriculum) is $F(l)=1-e^{-l_1}+1-e^{-l_2}$;
\item $l(v)=\partial(-R_{SP}(v))=-(\log v_1,\log v_2)^T$;
\item Curriculum direction is $k=(1,-1)^T$;
\item The critical region is $line_{(1,1)^T}$;
\item The new latent objective is $$F^{new} (l)=\left \{                 
  \begin{array}{ccc}   
    1-e^{-l_1}+1-e^{-l_2} &l_1\le l_2, \\  
    2(1-e^{-\frac{l_1+l_2}{2}}) & l_1\ge l_2 ;\\  
  \end{array}
\right.$$
\item The new weighting function $$v^{new} (l)=\left \{                 
  \begin{array}{ccc}   
    (e^{-l_1},e^{-l_2})^T &l_1\le l_2, \\  
    (e^{-\frac{l_1+l_2}{2}},e^{-\frac{l_1+l_2}{2}})^T & l_1\ge l_2. \\  
  \end{array}
\right.$$
\end{itemize}

When hard SP-regularizer is chosen, the partial order curriculum can determine the learning order of each example \cite{Jiang2015Self}.

\subsubsection{Linear Curriculum}

It's natural to extend the above discussion to the more general linear curriculum case, as defined in the following.
\begin{definition}[Linear Curriculum] If $\Psi=\{\mathbf{v}|\mathbf{v}^T \mathbf{k}\ge b\}$, we call $\Psi$ the linear curriculum and we call $\mathbf{k}$ the linear curriculum direction.
\end{definition}

Then we can prove the following result:
\begin{theorem}\label{Influence of Linear Curriculum}
  Suppose $R_{SP}(\mathbf{v})$ is essential strictly convex and satisfies Definition~\ref{assumption for SP-regularizer}. Suppose the curriculum is $\mathbf{v}^T \mathbf{k}\ge b$ corresponding to the curriculum region $\Psi=\{\mathbf{v}|\mathbf{v}^T \mathbf{k}\ge b\}$. If $\Psi$ satisfies Assumption~\ref{Assumption for Curriculum Region},
then $$F^{new} (\mathbf{l})=\left \{                 
  \begin{array}{ccc}   
    F(\mathbf{l}) & \mathbf{l}\in \partial g(\mathbf{k}^\perp+\frac{b\mathbf{k}}{\Vert \mathbf{k}\Vert _2})-ray_\mathbf{k}, \\  
    F(\mathbf{l}-\beta^0(\mathbf{l})\mathbf{k})+\beta^0 b(\ge F(\mathbf{l})) & \mathbf{l}\in \partial g(\mathbf{k}^\perp+\frac{b\mathbf{k}}{\Vert \mathbf{k}\Vert _2})+ray_\mathbf{k}, \\  
  \end{array}
\right.$$
where  $\beta^0(\mathbf{l})=\max \arg \limits_\beta \{\nabla F(\mathbf{l}-\beta \mathbf{k})^T \mathbf{k}=0\}$.
\end{theorem}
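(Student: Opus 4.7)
The plan is to follow the same three-step template used for Theorem~\ref{Influence of Linear Homogeneous Curriculum} and extend it to the inhomogeneous half-space $\Psi=\{\mathbf{v}:\mathbf{v}^{T}\mathbf{k}\ge b\}$. First I would invoke Theorem~\ref{New Object Function of Curriculum Region} to write $F^{new}(\mathbf{l}) = F\oplus \delta^{*}(\cdot|\Psi)(\mathbf{l})$, so the task reduces to (i) evaluating the support function of a half-space and (ii) unfolding the sup-convolution.

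For the support function, I would compute $\delta^{*}(\mathbf{l}|\Psi)=\inf_{\mathbf{v}^{T}\mathbf{k}\ge b}\langle \mathbf{v},\mathbf{l}\rangle$ directly: if $\mathbf{l}$ has any component orthogonal to $\mathbf{k}$, sending $\mathbf{v}$ to infinity in that direction (while preserving feasibility) drives the value to $-\infty$; if $\mathbf{l}=\beta\mathbf{k}$ with $\beta<0$, the same trick works along $\mathbf{k}$ itself; only when $\mathbf{l}=\beta\mathbf{k}$ with $\beta\ge 0$ is the infimum finite, equal to $\beta b$. Plugging this into the sup-convolution collapses it to a scalar optimization,
\begin{equation*}
F^{new}(\mathbf{l}) \;=\; \sup_{\beta\ge 0}\bigl\{F(\mathbf{l}-\beta\mathbf{k})+\beta b\bigr\},
\end{equation*}
which generalizes the ray-supremum formula of Theorem~\ref{Geometric Effect of Linear Homogeneous Curriculum}. (Equivalently one can dualize the constraint $\mathbf{v}^{T}\mathbf{k}\ge b$ via a Lagrange multiplier $\beta\ge 0$, obtaining the same scalar problem.)

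Next I would perform the case analysis by a first-order condition in $\beta$. Because $R_{SP}$ is essentially strictly convex with $dom_{\mathbf v}R_{SP}\subset[0,1]^{n}$ (hence bounded), Corollary~\ref{Strictly Convexity implies Differentiability} guarantees that $F$ is differentiable, so the objective $F(\mathbf{l}-\beta\mathbf{k})+\beta b$ is a smooth concave function of $\beta$ whose FOC reads $\nabla F(\mathbf{l}-\beta\mathbf{k})^{T}\mathbf{k}=b$. By the duality of subdifferentials (Property~\ref{DoS}), $\nabla F(\mathbf{l}')=(\nabla g)^{-1}(\mathbf{l}')$, so $\nabla F(\mathbf{l}')^{T}\mathbf{k}=b$ is equivalent to $\mathbf{l}'\in\partial g(\{\mathbf v:\mathbf v^{T}\mathbf k=b\})=\partial g(\mathbf k^{\perp}+\tfrac{b\mathbf k}{\|\mathbf k\|_2})$; this identifies the ``critical region'' of the theorem as the image under $\nabla g$ of the boundary hyperplane of $\Psi$. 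The critical region slices $\mathcal R^{n}$ into a shifted pair of half-spaces along the ray $ray_{\mathbf{k}}$: on one side the unconstrained optimizer $\nabla F(\mathbf l)$ already satisfies $\nabla F(\mathbf l)^{T}\mathbf k\ge b$, so the supremum is attained at $\beta=0$ and $F^{new}(\mathbf l)=F(\mathbf l)$; on the other side the constraint binds, the supremum is attained at the largest root $\beta^{0}(\mathbf l)$ of the FOC, giving $F^{new}(\mathbf l)=F(\mathbf l-\beta^{0}(\mathbf l)\mathbf k)+\beta^{0}(\mathbf l)b$.

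The main obstacle, as in the homogeneous case, is verifying that the split by sign of $\beta^{0}$ really corresponds to the geometric split $\partial g(\mathbf k^{\perp}+\tfrac{b\mathbf k}{\|\mathbf k\|_2})\pm ray_{\mathbf k}$, and choosing the \emph{maximal} root of the FOC to resolve the non-uniqueness coming from $\partial g$ being only essentially (not strictly) single-valued on the boundary of $[0,1]^{n}$. I would handle this by monotonicity: along the curve $\beta\mapsto \nabla F(\mathbf l-\beta\mathbf k)^{T}\mathbf k$, concavity of $F$ implies this scalar function is nondecreasing in $\beta$, so the solution set is an interval and selecting its maximum picks out the unique active-constraint point lying on the critical region; the inequality $F^{new}(\mathbf l)\ge F(\mathbf l)$ then follows immediately because $\beta=0$ is always a feasible candidate in the supremum. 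Specializing $b=0$ recovers Theorem~\ref{Influence of Linear Homogeneous Curriculum} as a sanity check.
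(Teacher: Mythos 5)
Your proposal is correct and follows the route the paper's framework dictates: reduce to $F\oplus\delta^{*}(\cdot|\Psi)$ via Theorem~\ref{New Object Function of Curriculum Region}, compute the half-space support function (finite and equal to $\beta b$ only on $\{\beta\mathbf{k}:\beta\ge 0\}$) so the sup-convolution collapses to $\sup_{\beta\ge 0}\{F(\mathbf{l}-\beta\mathbf{k})+\beta b\}$, and split cases by the first-order condition together with the monotonicity of $\beta\mapsto\nabla F(\mathbf{l}-\beta\mathbf{k})^{T}\mathbf{k}$. One remark worth keeping: your stationarity condition $\nabla F(\mathbf{l}-\beta\mathbf{k})^{T}\mathbf{k}=b$ is the version consistent with the stated critical region $\partial g(\mathbf{k}^{\perp}+b\mathbf{k}/\Vert\mathbf{k}\Vert_{2})$ and with the $+\beta^{0}b$ term, whereas the ``$=0$'' in the theorem's definition of $\beta^{0}(\mathbf{l})$ appears to be a carryover from the homogeneous case $b=0$; your derivation correctly identifies and repairs this.
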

Proof in Appendix~G.

Theorem~\ref{Influence of Linear Curriculum} helps to obtain the latent objective under SPCL with linear curriculum, which shares the similar structure comparing the one in Theorem~\ref{Influence of Linear Homogeneous Curriculum}. The critical region here becomes $ \partial g(k^\perp+\frac{bk}{\Vert k\Vert _2})$. It is easy to see that the linear curriculum punishes the loss on one side of the critical region and keeps the other side the same.

\subsection{Group Curriculum}
When we have the prior knowledge that some training samples are coming from the similar group with similar importance weights, we can group all samples into multiple categories and make each group share similar weights. In regard to weight sharing, we call it the group curriculum.

Suppose the original $R_{SP}$ regularizer of each example is the same and SPL model can be separated into 1 dimesion sub-problem.
$$\inf \limits_{\mathbf{v}\in [0,1]^n} E(f,\mathbf{v};\lambda)=\inf \limits_{v_1\in [0,1]}\{v_1 l_1+R_{SP}(v_1,\lambda)\}+\cdots+\inf \limits_{v_n\in [0,1]}\{v_n l_n+R_{SP}(\mathbf{v}_n,\lambda)\}+R_\mathcal{F}(f).$$

Suppose $\{i_1,\cdots,i_{s_i}\}$ are members of the group $i$ and there are $k$ groups. By adding the group curriculum, we obtain the new object function as:
$$\inf \limits_{v_1\in [0,1]}\{s_1R_{SP}(v_1,\lambda)+v_1\sum_{j=1}^{s_1} l_{1j}\}+\cdots+\inf \limits_{v_k\in [0,1]}\{s_kR_{SP}(v_k,\lambda)+v_k\sum_{j=1}^{s_k} l_{kj}\}+R_\mathcal{F}(f).$$
Suppose the latent objective function in SPL without such curriculum constraint for one example is $\bar{F_\lambda}$, and then
the new latent objective is
$$F^{new}(\mathbf{l})=(s_1 \star\bar{F_\lambda})(\sum_{j=1}^{s_1} l_{1j})+\cdots+(s_k \star\bar{F_\lambda})(\sum_{j=1}^{s_k} l_{kj}), $$
where $s_1 \star\bar{F_\lambda}(x)\triangleq s_1 \bar{F_\lambda}(s_1^{-1}x)$.

It can be seen that the group curriculum corresponds to a special curriculum region that weights in the same group are restricted to be the same. In regard to knowledge confidence of different group, the partial order curriculum can also be introduced together.

\section{Conclusion}
 In this paper we have established a systematic theory for analyzing the SPL and SPCL via the concave conjugacy theory. We prove that the SPL model corresponds to optimizing a concave, increasing and continuous latent objective function. The relations among weight function, latent objective function and SP-regularizer can thus be obtained explicitly. Furthermore, two general methods for designing the SPL model has been rendered under this theoretical framework. Besides, the latent objective function of SPCL can be derived, instead of only those of SPL by some conventional methods. Such a study tends to be beneficial to facilitate deeper understandings and broader applications of SPL in the future research.

\address{School of Mathematics and Statistics, Xi'an Jiaotong University\\
Xian Ning West Road, Xi'an, Shaan'xi, 710049, P.R. China\\
\email{dymeng@mail.xjtu.edu.cn}}


\end{document}